\newtheorem{remark}{Remark}
\newtheorem{proposition}{Proposition}
\newtheorem{assumption}{Assumption}
\newtheorem{corollary}{Corollary}
\newtheorem{theorem}{Theorem}
\newtheorem{example}{Example}
\def\blackslug{\hbox{\hskip 1pt \vrule width 4pt height 8pt depth 1.5pt
\hskip 1pt}}
\def\qed{\quad\blackslug\lower 8.5pt\null\par\medskip}
\journal{Journal of Multivariate Analysis}
\begin{document}

\begin{frontmatter}

\title{ Survey schemes for stochastic gradient descent \\with applications to $M$-estimation}

\author[TPT]{St\'{e}phan Cl\'{e}men\c{c}on}
       
       \author[PX]{Patrice Bertail}
       
       \author[MPT]{Emilie Chautru}
       
       \author[TPT]{Guillaume Papa}
       
      \address[TPT]{Telecom ParisTech LTCI UMR Telecom ParisTech/CNRS No. 5141\\
             Telecom ParisTech
             46 rue Barrault, Paris, 75634, France}
      \address[PX]{Universit\'e Paris Ouest\\
                    MODAL'X
                    200 avenue de la R\'epublique, Nanterre, 92000, France}
                    \address[MPT]{Mines ParisTech\\
                                        Centre de Geosciences
                                         35 rue Saint Honor\'e, Fontainebleau, 77305, France}

\begin{abstract}
In certain situations that shall be undoubtedly more and more common in the Big Data era, the datasets available are so massive that computing statistics over the full sample is hardly feasible, if not unfeasible. A natural approach in this context consists in using survey schemes and substituting the "full data" statistics with their counterparts based on the resulting random samples, of manageable size. It is the main purpose of this paper to investigate the impact of survey sampling with unequal inclusion probabilities on stochastic gradient descent-based $M$-estimation methods in large-scale statistical and machine-learning problems.  Precisely, we prove that, in presence of some \textit{a priori} information, one may significantly increase asymptotic accuracy when choosing appropriate first order inclusion probabilities, without affecting complexity.  These striking results are described here by limit theorems and are also illustrated by numerical experiments. 
\end{abstract}

\begin{keyword}
statistical learning; survey schemes; sampling designs; stochastic gradient descent; Horvitz-Thompson estimation
\end{keyword}
\end{frontmatter}

\section{Introduction}

In many situations, data are not the sole information that can be exploited by statisticians. Sometimes, they can also make use of weights resulting from some survey sampling design. Such quantities correspond either to true inclusion probabilities or else to calibrated or post-stratification weights, minimizing some discrepancy under certain margin constraints for the inclusion probabilities.  Asymptotic analysis of Horvitz-Thompson estimators based on survey data (see \cite{HT51}) has received a good deal of attention, in particular in the context of mean estimation and regression (see \cite{Hajek64},\cite{Ros72}, \cite{Rob82}, \cite{DS92}, \cite{Ber98} for instance). The last few years have witnessed significant progress towards a comprehensive functional limit theory for distribution function estimation, refer to \cite{Wellner88}, \cite{Breslow06}, \cite{Breslow08}, \cite{Breslow09}, \cite{Wellner11} or \cite{BCC13}. In parallel, the field of machine-learning has been the subject of a spectacular development. Its practice has become increasingly popular in a wide range of fields thanks to various breakout algorithms (\textit{e.g.} neural networks, SVM, boosting methods) and is supported by a sound probabilistic theory based on recent results in the study of empirical processes, see \cite{DGL96}, \cite{Kolt06}, \cite{BBL05}. However, our increasing capacity to collect data, due to the ubiquity of sensors, has improved much faster than our ability to process and analyze Big Datasets, see \cite{BB08}. The availability of massive information in the Big Data era, which machine-learning procedures could theoretically now rely on, has motivated the recent development of \textit{parallelized/distributed} variants of certain statistical learning algorithms, see \cite{BBL11}, \cite{mateos:bazerque:giannakis:2010}, \cite{navia:etal:2006} or \cite{BCJM13} among others. It also strongly suggests to use survey techniques, as a remedy to the apparent intractability of learning from datasets of explosive size, in order to break the current computational barriers, see \cite{CRT13}. The present article explores the latter approach, following in the footsteps of \cite{CRT13}, where the advantages of specific sampling plans compared to naive sub-sampling strategies are proved when the risk functional is estimated by generalized $U$-statistics.
 
 Our goal is here to show how to incorporate sampling schemes into iterative statistical learning techniques based on stochastic gradient descent (SGD in abbreviated form, see \cite{B98}) such as {\sc SVM}, {\sc Neural Networks} or {\sc soft $K$-means} for instance and establish (asymptotic) results, in order to guarantee their theoretical validity. The variant of the SGD method we propose involves a specific estimator of the gradient, which shall be referred to as the \textit{Horvitz-Thompson gradient estimator} ({\sc HTGD} estimator in abbreviated form) throughout the paper and accounts for the sampling design by means of which the data sample has been selected at each iteration.  For the estimator thus produced, consistency and asymptotic normality results describing its statistical performance are established under adequate assumptions on the first and second order inclusion probabilities. They reveal that accuracy may significantly increase (\textit{i.e.} the asymptotic variance may be drastically reduced) when the inclusion probabilities of the survey design are picked adequately, depending on some supposedly available extra information, compared to a naive implementation with equal inclusion probabilities. This is thoroughly discussed in the particular case of the Poisson survey scheme. Although it is one of the simplest sampling designs, many more general survey schemes may be expressed as Poisson schemes conditioned upon specific events.
 We point out that statistical learning based on non i.i.d. data has been investigated in \cite{Steinwart09} (see also \cite{Duchi13} for analogous results in the on-line framework). However, the framework considered by these authors relies on \textit{mixing} assumptions, guaranteeing the weak dependency of the data sequences analyzed, and is thus quite different from that developed in the present article. We point out that a very preliminary version of this work has been presented at the 2014 IEEE International Conference on Big Data.
 
 The rest of the paper is structured as follows. Basics in $M$-estimation and SGD techniques together with key notions in survey sampling theory are briefly recalled in section \ref{sec:background}. Section \ref{sec:HTalgo} first describes the Horvitz-Thompson variant of the SGD in the context of a general $M$-estimation problem. In section \ref{sec:main}, limit results are established in a general framework, revealing the possible significant gain in terms of asymptotic variance resulting from sampling with unequal probabilities in presence of extra information.  They are next discussed in more depth in the specific case of Poisson surveys. 
 Illustrative numerical experiments, consisting in fitting a logistic regression model (respectively, a semi-parametric shift model ) with extra information, are displayed in section \ref{sec:num}. 
 Technical proofs are postponed to the Appendix section, together with a rate bound analysis of the {\sc HTGD} algorithm.

\section{Theoretical background}\label{sec:background}
 As a first go, we start off with describing the mathematical setup and recalling key concepts in survey theory involved in the subsequent analysis. Here and throughout, the indicator function of any event $\mathcal{B}$ is denoted by $\mathbb{I}\{\mathcal{B}\}$, the Dirac mass at any point $a$ by $\delta_a$ and the power set of any set $E$ by $\mathcal{P}(E)$. The euclidean norm of any vector $x\in \mathbb{R}^d$, $d\geq 1$, is denoted by $\vert\vert x\vert\vert=(\sum_{i=1}^d x_i^2)^{1/2}$. The transpose of a matrix $A$ is denoted by $A^T$, the square root of any symmetric semi-definite positive matrix $B$ by $B^{1/2}$.

\subsection{Iterative $M$-estimation and SGD methods}
Let $\Theta\subset \mathbb{R}^q$ with $q\geq 1$ be some parameter space and $\psi:\mathbb{R}^d\times \Theta\rightarrow \mathbb{R}$ be some smooth loss function. Let $Z$ be a random variable taking its values in $\mathbb{R}^d$ such that $\psi(Z,\theta)$ is square integrable for any $\theta\in \Theta$. Set $L(\theta)=\mathbb{E}[\psi(Z,\theta)]$ for all $\theta\in \Theta$. Consider the \textit{risk minimization} problem
$$
\min_{\theta\in \Theta}L(\theta).
$$
Based on independent copies $Z_1,\; \ldots,\; Z_N$ 	of the r.v. $Z$, the empirical version of the risk function is
$\theta\in \Theta\mapsto \widehat{L}_N(\theta)$,
where $$
\widehat{L}_N(\theta)=\frac{1}{N}\sum_{i=1}^N\psi(Z_i,\theta)
$$
for all $\theta \in \Theta$. As $N\rightarrow +\infty$, asymptotic properties of $M$-estimators, \textit{i.e.} minimizers of $\widehat{L}_N(\theta)$, have been extensively investigated, see Chapter 5 in \cite{vdV98} for instance. Here and throughout, we respectively denote by $\nabla_{\theta}$ and $\nabla^2_{\theta}$ the gradient and Hessian operators w.r.t. $\theta$. By convention, $\nabla^0_{\theta}$ denotes the identity operator and gradient values are represented as column vectors.
\medskip

\noindent {\bf Gradient descent.} Concerning computational issues (see \cite{Berts03}), many practical machine-learning algorithms implement variants of the standard gradient descent method, following the iterations:
\begin{equation}\label{eq:iter1}
\theta(t+1)=\theta(t)-\gamma(t)\nabla_{\theta}\widehat{L}_N(\theta(t)),
\end{equation}
with an initial value $\theta(0)$ arbitrarily chosen and a learning rate (step size or gain) $\gamma(t)\geq 0$ such that $\sum_{t=1}^{+\infty}\gamma(t)=+\infty$ and $\sum_{t=1}^{+\infty}\gamma^2(t)<+\infty$. Here we place ourselves in a large-scale setting, where the sample size $N$ of the training dataset is so large that computing the gradient of $\widehat{L}_N$
\begin{equation}\label{eq:emp_gradient}
\widehat{l}_N(\theta)=\frac{1}{N}\sum_{i=1}^N\nabla_{\theta}\psi(Z_i,\theta)
\end{equation}
at each iteration \eqref{eq:iter1} is too demanding regarding available memory capacity. Beyond parallel and distributed implementation strategies (see \cite{BBL11}), a natural approach consists in replacing \eqref{eq:emp_gradient} by a counterpart computed from a subsample 
$S\subset\{1,\; \ldots,\; N  \}$ of reduced size $n<<N$, fixed in advance so as to fulfill the computational constraints, and drawn at random (uniformly) among all possible subsets of same size:
\begin{equation}\label{eq:sub_gradient_naive}
\bar{l}_n(\theta)=\frac{1}{n}\sum_{i\in S}\nabla_{\theta}\psi(Z_i,\theta).
\end{equation}
The convergence properties of such a stochastic gradient descent, usually referred to as \textit{mini-batch} SGD have received a good deal of attention, in particular in the case $n=1$, suited to the \textit{on-line} situation where training data are progressively available. Results, mainly based on stochastic approximation combined with convex minimization theory, under appropriate assumptions on the decay of the step size $\gamma(t)$ are well-documented in the statistical learning literature. References are much too numerous to be listed exhaustively, see \cite{KYBook} for instance.

\begin{example}{\sc (Binary classification)}
We place ourselves in the usual binary classification framework, where $Y$ is a binary random output, taking its values in $\{-1,\; +1\}$ say, and $X$ is an input random vector valued in a high-dimensional space $\mathcal{X}$, modeling some (hopefully) useful observation for predicting $Y$. Based on training data $\{(X_1,Y_1),\; \ldots,\; (X_N,Y_N)  \}$, the goal is to build a prediction rule $\mathrm{sign}(h(X))$, where $h:\mathcal{X}\rightarrow \mathbb{R}$ is some measurable function, which minimizes the risk
$$L_{\varphi}(h)=\mathbb{E}\left[ \varphi(-Yh(X)) \right],$$
where expectation is taken over the unknown distribution of the pair of r.v.'s $(X,Y)$ and $\varphi:\mathbb{R}\rightarrow [0,\;+\infty)$ denotes a cost function (\textit{i.e.} a measurable function such that $\varphi(u)\geq \mathbb{I}\{u\geq 0\}$ for any $u\in \mathbb{R}$). For simplicity, consider the case where decision function candidates $h(x)$ are assumed to belong to the parametric set of square integrable (with respect to $X$'s distribution) functions indexed by $\Theta\subset \mathbb{R}^q$, $q\geq 1$, $\{h(.,\; \theta),\; \theta\in \Theta  \}$ and the convex cost function is $\varphi(u)=(u+1)^2/2$. Notice that, in this case, the optimal decision function is given by: $\forall x\in \mathcal{X}$, $h^*(x)=2\mathbb{P}\{Y=+1\mid X=x\}-1$. The classification rule $H^*(x)=\mathrm{sign}(h^*(x))$ thus coincides with the naive Bayes classifier. We abusively set $L_{\varphi}(\theta)=L_{\varphi}(h(.,\; \theta))$ for all $\theta \in \Theta$.
Consider the problem of finding a classification rule with minimum risk, \textit{i.e.} the optimization problem $\min_{\theta\in\Theta} L_\varphi(\theta)$.
In the ideal case where a standard gradient descent could be applied, one would iteratively generate a sequence
$\theta(t) = (\theta_1(t),\cdots,\theta_d(t))$, $t\geq 1$,
satisfying the following update equation:
$$
\theta(t+1) = \theta(t) + \gamma(t)\, \mathbb{E}\left[Y\nabla_{\theta} h(X,\theta(t))\varphi'(-YH(X,\theta(t)))\right],
$$
where $\gamma(t)>0$ is the learning rate. Naturally, as $(X,Y)$'s distribution is unknown, the expectation involved
in the $t$-th iteration cannot be computed and must be replaced by a statistical version, $$(1/N)\sum_{i=1}^N \{Y_i\nabla_{\theta} h(X_i,\theta(t))\varphi'(-Y_iH(X_i,\theta(t)))  \}$$ in accordance with the \textit{Empirical Risk Minimization} paradigm. This is a particular case of the problem previously described, where $Z=(X,Y)$ and $\psi(Z,\theta)=\varphi(-Yh(X,\theta))$.
\end{example}
\begin{example}\label{ex:logistic}{\sc (Logistic regression)} Consider the same probabilistic model as above, except that the goal pursued is to find $\theta\in \Theta$ so as to minimize
\begin{multline*}
-\sum_{i=1}^N\left\{\frac{Y_i+1}{2}\log \left(\frac{exp(h(X_i,\theta))}{1+exp(h(X_i,\theta))}\right)+\frac{1-Y_i}{2}\log\left(\frac{1}{1+exp(h(X_i,\theta))}\right)\right\},
\end{multline*}
which is nothing else than the opposite of the conditional log-likelihood given the $X_i$'s related to the parametric logistic regression model: $\theta\in \Theta$, $$\mathbb{P}_{\theta}\{Y=+1\mid X  \}=exp(h(X,\theta))/(1+exp(h(X,\theta))).$$
\end{example}
\subsection{Survey sampling}
Let $(\Omega, \mathcal{A},\mathbf{P})$ be a probability space and $N\geq 1$. In the framework we consider throughout the article, it is assumed that $Z_1,\;\ldots,\; Z_N$ is a sample of i.i.d. random variables defined on $(\Omega, \mathcal{A},\mathbf{P})$, taking their values in $\mathbb{R}^d$. The $Z_i$'s correspond to independent copies of a generic r. v. $Z$ observed on a finite population $\mathcal{U}_N := \{1,\; \ldots,\; N\}$.
We call a \textit{survey sample} of (possibly random) size $n \leq N$ of the population $\mathcal{U}_N$, any subset $s := \{i_1, \dots, i_{n(s)}\} \in \mathcal{P}(\mathcal{U}_N)$ with cardinality $n =: n(s)$ less that $N$. Given the statistical population $\mathcal{U}_N$, a sampling scheme (design/plan) without replacement is determined by a probability distribution $R_N$ on the set of all possible samples $s \in \mathcal{P}(\mathcal{U}_N)$. For any $i\in \{1, \dots, N\}$, the (first order) \textit{inclusion probability}, 
\begin{equation*} 
\pi_i(R_N) := \mathbb{P}_{R_N}\{i \in S\},
\end{equation*}
is the probability that the unit $i$ belongs to a random sample $S$ drawn from distribution $R_N$. We set $\boldsymbol{\pi}(R_N) := (\pi_1(R_N), \dots, \pi_N(R_N))$. The second order inclusion probabilities are denoted by
\begin{equation*}
\pi_{i,j}(R_N) := \mathbb{P}_{R_N}\{(i,j)\in S^2\},
\end{equation*}
for any $(i,j)$ in $\{1,\dots,N\}^2$. Equipped with these notation, we have $\pi_{i,i}=\pi_i$ for $1\leq i\leq N$. When no confusion is possible, we shall omit to mention the dependence in $R_N$ when writing the first/second order probabilities of inclusion. The information related to the resulting random sample $S \subset \{1,\dots, N\}$ is fully enclosed in the r.v. $\boldsymbol{\epsilon}_N := (\epsilon_1,\dots,\epsilon_N)$, where $\epsilon_{i} = \mathbb{I}\{i\in S\}$.
Given the statistical population, the conditional $1$-d marginal distributions of the sampling scheme $\boldsymbol{\epsilon}_N$ are the Bernoulli distributions $\mathcal{B}(\pi_{i})=\pi_i \delta_1 +(1-\pi_i)\delta_0$, $1\leq i\leq N$, and the conditional covariance matrix of the r.v. $\boldsymbol{\epsilon}_N$ is given by $\Gamma_{N} := \left\{\pi_{i,j} - \pi_{i}\pi_{j}  \right\}_{1\leq i, j\leq N}$.
Observe that, equipped with the notations above, $\sum_{i=1}^{N}\epsilon _{i} = n(S)$.

\par One of the simplest survey plans is the Poisson scheme (without replacement). For such a plan $T_N$, conditioned upon the statistical population of interest, the $\epsilon_i$'s are independent Bernoulli random variables with parameters $p_1,\;\ldots,\; p_N$ in $(0,1)$. The first order inclusion probabilities thus characterize fully such a plan: equipped with the notations above, $\pi_i(T_N)=p_i$ for $i\in\{1,\; \ldots,\; N \}$. Observe in addition that the size $n(S)$ of a sample generated this way is random with mean $\sum_{i=1}^Np_i$ and goes to infinity as $N\rightarrow +\infty$ with probability one, provided that $\min_{1\leq i \leq N}p_i$ remains bounded away from zero. In addition to its simplicity (regarding the procedure to select a sample thus distributed), it plays a crucial role in sampling theory, insofar as it can be used to build a wide range of survey plans by conditioning arguments, see \cite{Hajek64}. For instance, a \textit{rejective sampling plan} of size $n\leq N$ corresponds to the distribution of a Poisson scheme $\boldsymbol{\epsilon}_N$ conditioned upon the event $\{\sum_{i=1}^N\epsilon_i=n  \}$. One may refer to \cite{Cochran77}, \cite{Dev87} for accounts of survey sampling techniques and examples of designs to which the subsequent analysis applies.

\subsection{The Horvitz-Thompson estimator}
Suppose that independent r.v.'s $Q_1,\; \ldots,\; Q_N$, copies of a generic variable $Q$ taking its values in $\mathbb{R}^d$, are observed on the population $\mathcal{U}_N$.
A natural approach to estimate the total $\mathbf{Q}_N=\sum_{i=1}^N Q_i$ based on a sample $S\subset\{1,\; \ldots,\; N  \}$ generated from a survey design $R_N$ with (first order) inclusion probabilities $\{\pi_i\}_{1\leq i \leq N}$ consists in computing the {\it Horvitz-Thompson estimator} (HT estimator in abbreviated form)
\begin{equation}\label{eq:HT_total}
\bar{\mathbf{Q}}^{HT}_{R_N}=\sum_{i\in S}\frac{1}{\pi_i}Q_i=\sum_{i=1}^N\frac{\epsilon_i}{\pi_i}Q_i,
\end{equation}
with $0/0=0$ by convention. Notice that, given the whole statistical population $Q_1,\; \ldots,\; Q_N$, the HT estimator is an unbiased estimate of the total: $\mathbb{E}[ \bar{\mathbf{Q}}^{HT}_{R_N}\mid Q_1,\; \ldots,\; Q_N ]=\mathbf{Q}_N$ almost-surely. When samples drawn from $R_N$ are of fixed size, the conditional variance is given by:
\begin{equation}\label{eq:var_cond}
var\left( \bar{\mathbf{Q}}^{HT}_{R_N}\mid Q_1,\; \ldots,\; Q_N \right)=\sum_{i< j}\Vert\frac{Q_i}{\pi_i}-\frac{Q_j}{\pi_j} \Vert^2(\pi_{i,j}-\pi_i\pi_j).
\end{equation}
When the survey design is a Poisson plan $T_N$ with probabilities $p_1,\; \ldots,\; p_N$, it is given by:
\begin{equation}\label{eq:var_Poisson}
var\left( \bar{\mathbf{Q}}^{HT}_{T_N}\mid Q_1,\; \ldots,\; Q_N \right)=\sum_{i=1}^N\frac{1-p_i}{p_i}\Vert Q_i
\Vert ^2.
\end{equation}

\begin{remark}\label{rk:aux}{\sc (Auxiliary information)}
In practice, the first order inclusion probabilities are defined as a function of an \textit{auxiliary variable}, $W$ taking its values in $\mathbb{R}^{d'}$ say, which is observed on the entire population (\textit{e.g.} a $d'$-dimensional marginal vector $Z'$ for instance): for all $i \in \{1, \dots, N\}$ we can write $\pi_i = \pi(W_i)$ for some link function $\pi:\mathbb{R}^{d'}\rightarrow (0,1)$. When $W$ and $Q$ are strongly correlated, proceeding this way may help us select more informative samples and consequently yield estimators with reduced variance. A more detailed discussion on the use of auxiliary information in the present context can be found in subsection \ref{subsec:optimal}.
\end{remark}

Going back to the SGD problem, the {\it Horvitz-Thompson estimator} of the gradient $\widehat{l}_N(\theta)$ based on a survey sample $S$ drawn from a design $R_N$ with (first order) inclusion probabilities $\{\pi_i \}_{1\leq i \leq N}$ is:
\begin{equation}\label{eq:HT_gradient}
\overline{l}^{HT}_{\pi}(\theta)=\frac{1}{N}\sum_{i\in S}\frac{1}{\pi_i}\nabla_{\theta}\psi(Z_i,\theta).
\end{equation}
As pointed out in Remark \ref{rk:aux}, ideally, the quantity $\pi_i$ should be strongly correlated with $\nabla_{\theta}\psi(Z_i,\theta)$. Hence, this leads to consider a procedure where the survey design used to estimate the gradient may change at each step, as in the {\sc HTGD} algorithm described in the next section. For instance, one could stipulate the availability of extra information taking the form of random fields on a space $\mathcal{W}$, $\{W_i(\theta)\}_{\theta\in \Theta}$ with $1\leq i \leq N$, and assume the existence of a link function $\pi: \mathcal{W}\rightarrow (0,1)$ such that $\pi_i=\pi(W_i(\theta))$.
Of course, such an approach is of benefit only when the cost of the computation of the weight $\pi(W_i(\theta))$ is smaller than that of the gradient $\nabla_{\theta}\psi(Z_i,\theta)$. As shall be seen in section \ref{sec:num}, this happens to be the case in many situations encountered in practice.

\section{Horvitz-Thompson gradient descent}\label{sec:HTalgo}

This section presents, in full generality, the variant of the SGD method we promote in this article. It can be implemented in particular when some extra information about the target (the gradient vector field in the present case) is available, allowing hopefully for picking a sample yielding a more accurate estimation of the (true) gradient than that obtained by means of a sample chosen completely at random. Several tuning parameters must be picked by the user, including the parameter $N_0$ which controls the number of terms involved in the empirical gradient estimation at each iteration, see Fig. \ref{HTGDalgo}.

\begin{figure}[h!]
\begin{center}
\fbox{
\begin{minipage}[t]{13cm}
\medskip

\begin{center}
{\sc Horvitz-Thompson Gradient Descent Algorithm (HTGD)}
\end{center}
\medskip
{\small
\begin{enumerate}
\item[] ({\sc Input.}) Datasets $\{Z_1,\; \ldots,\; Z_N  \}$ and $\{W_1,\; \ldots,\; W_{N}\}$. Maximum (expected) sample size $N_0\leq N$. Collection of sampling plans $R_N(\theta)$ with first order inclusion probabilities $\pi_i(\theta)$ for $1\leq i \leq N$, indexed by $\theta\in \Theta$ with (expected) sample sizes less than $N_0$. Learning rate $\gamma(t)>0$. Number of iterations $T\geq 1$.
\medskip

\item ({\sc Initialization.}) Choose $\widehat{\theta}(0)$ in $\Theta$.
\medskip

\item ({\sc Iterations.}) For $t=0,\; \ldots,\; T$
\begin{enumerate}
\item Draw a survey sample $S=S_t$, described by $\boldsymbol{\epsilon}_N^{(t)}=(\epsilon_1^{(t)},\; \ldots,\; \epsilon_N^{(t)})$ according to $R_N(\widehat{\theta}(t))$ with inclusion probabilities $\pi_i(\widehat{\theta}(t))$ for $i=1,\; \ldots,\; N$.
\item Compute the HT gradient estimate at $\widehat{\theta}(t)$  
$$
\bar{l}^{HT}_{\pi}(\widehat{\theta}(t))\overset{def}{=}\frac{1}{N}\sum_{i=1}^N\frac{\epsilon_i^{(t)}}{\pi_i(\widehat{\theta}(t))}\nabla_{\theta}\psi(Z_i,\widehat{\theta}(t)).
$$
\item Update the estimator
$$
\widehat{\theta}(t+1)=\widehat{\theta}(t)-\gamma(t)\, \bar{l}^{HT}_{\pi}(\widehat{\theta}(t)). 
$$
\end{enumerate}
\medskip

\item[] ({\sc Output.})  The {\sc HTGD} estimator $\widehat{\theta}(T)$.
\end{enumerate}
\medskip
}
\end{minipage}
}
\end{center}

\caption{The generic {\sc HTGD} algorithm}\label{HTGDalgo}
\end{figure}
\bigskip

\par The asymptotic accuracy of the estimator or decision rule produced by the algorithm above as $T\rightarrow +\infty$ is investigated in the next section under specific assumptions.

\begin{remark}\label{rk:tradeoff}{\sc (Balance between accuracy and computational cost)} We point out that the complexity of any Poisson sampling algorithm is $O(N)$, just as in the usual case where data involved in the standard SGD are uniformly drawn with(out) replacement. However, even if it can be straightforwardly parallelized, the numerical computation of the inclusion probabilities at each step naturally induces a certain amount of additional latency. Hence, although HTGD may largely outperform SGD for a fixed number of iterations, this should be taken into consideration for optimizing computation time.

\end{remark}
\newpage
\section{Main results}\label{sec:main}
This section is dedicated to analyze the performance of the {\sc HTGD} method under adequate constraints, related to the (expected) size of the survey samples considered. We first focus on Poisson survey schemes and next discuss how to establish results in a general framework.

\subsection{Poisson schemes}\label{subsec:optimal}
Fix $\theta\in \Theta$ and $\mu_N\in (0,N)$. Given the sample $Z_1,\; \ldots,\; Z_N$, consider a Poisson scheme with parameter $p=(p_1,\; \ldots,\; p_N)$. In this case, Eq. \eqref{eq:var_cond} becomes:
\begin{equation*}
\mathbb{E}\left[\vert\vert \bar{l}^{HT}(\theta)-\widehat{l}_N(\theta) \vert\vert^2 \mid Z_1,\; \ldots,\; Z_N\right]=
\frac{1}{N^2}\sum_{i=1}^N \frac{1-p_i}{p_i}\vert\vert\nabla_{\theta}\psi(Z_i,\theta)  \vert\vert^2.
\end{equation*}
Searching for the parameters $p_1, \; \ldots,\; p_N$ such that the $L_2$ distance between the empirical gradient evaluated at $\theta$ and the HT version given $Z_1,\; \ldots,\; Z_N$ is minimum under the constraint that the expected sample size is equal to $N_0\in [0,N]$ yields the optimization problem:
\begin{equation}\label{eq:Poisson_opt}
\min_{p\in (0,1)^N} \sum_{i=1}^N \frac{1-p_i}{p_i}\vert\vert\nabla_{\theta}\psi(Z_i,\theta)  \vert\vert^2 \text{ s.t. } \sum_{i=1}^Np_i=N_0.
\end{equation}
As can be shown by means of the Lagrange multipliers method, the solution corresponds to weights being proportional to the values taken by the norm of the gradient:
\begin{equation}
\widetilde{p}_i(\theta)\overset{def}{=}N_0 \frac{\vert\vert\nabla_{\theta}\psi(Z_i,\theta)  \vert\vert}{\sum_{j=1}^N \vert\vert\nabla_{\theta}\psi(Z_j,\theta)  \vert\vert}.
\end{equation}
However, selecting a sample distributed this way requires to know the full statistical population $\nabla_{\theta}\psi(Z_i,\theta)$. In practice, one may consider situations where the weights are defined by means of a link function $\pi(W,\theta)$ and auxiliary variables $W_1,\; \ldots,\; W_N$ correlated with the $Z_i$'s, as suggested previously. Observe in addition that the goal pursued here is not to estimate the gradient but to implement a stochastic gradient descent involving an expected number of terms fixed in advance, while yielding results close to those that would be obtained by means of a gradient descent algorithm with mean field $(1/N)\sum_{i=1}^N\nabla_{\theta}\psi(Z_i,\theta)$ based on the whole dataset. However, as shall be seen in the subsequent analysis, in general these two problems do not share the same solution from the angle embraced in this article.

In the next subsection, assumptions on the survey design under which the {\sc HTGD} method yields accurate asymptotic results, surpassing those obtained with equal inclusion probabilities (\textit{i.e.} $p_i=N_0/N$ for all $i\in\{1,\;\ldots,\; N \}$), are exhibited.

\subsection{Limit theorems}

We now consider a collection of general (\textit{i.e.} not necessarily Poisson) sampling designs $\{R_N(\theta)\}_{\theta\in \Theta}$ and investigate the limit properties of the $M$-estimator produced by the {\sc HTGD} algorithm conditioned upon the data $\mathcal{D}_N=\{Z_1,\; \ldots,\; Z_N \}$ (or $\mathcal{D}_N=\{(Z_1,W_1),\; \ldots,\; (W_N,Z_N) \}$ in presence of extra variables, \textit{cf} Remark \ref{rk:aux}). The asymptotic analysis involves the \textit{regularity conditions} listed below, which are classically required in stochastic approximation.
\begin{assumption}\label{assumption_1} The conditions below hold true.
  \begin{itemize}
  \item For any $z$, $\theta\mapsto \psi(z,\theta)$ is of class $\mathcal{C}^1$.
  \item For any compact set $\mathcal{K}\subset\mathbb{R}^{d}$, we have with probability one: $\forall i\in\{1,\; \ldots,\; N\}$,
   \begin{equation*}
    \sup_{\theta\in \mathcal{K}}\frac{\left\|\nabla_{\theta} \psi(Z_i,\theta)\right\|}{\pi_i(\theta)}<+\infty.
    \end{equation*}
\item The set of stationary points $\mathcal{L}_N = \{\theta:\nabla_{\theta} \widehat{L}_N(\theta)=0\}$ is of finite cardinality.
  \end{itemize}
\end{assumption}

\begin{theorem}\label{thm:consistency} {\sc (Consistency)} Assume that the learning rate decays to $0$ so that $\sum_{t\geq 1} \gamma(t)=+\infty$ and $\sum_{t\geq 0}\gamma^2(t)<+\infty$. Suppose also that the {\sc HTGD} algorithm is stable, \textit{i.e.} there exists a compact set $\mathcal{K}\subset \mathbb{R}^d$ s.t. $\theta(t)\in \mathcal{K}$ for all $t\geq 0$. Under Assumption \ref{assumption_1}, conditioned upon the data $\mathcal{D}_N$, the sequence $\{\widehat{\theta}(t)  \}_{t\geq 0}$ converges to an element of the set $\mathcal{L}_N$ with probability one, as $t\rightarrow +\infty$.
\end{theorem}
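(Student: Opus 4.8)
The plan is to recognize the {\sc HTGD} recursion as a Robbins--Monro stochastic approximation scheme whose mean field is the \emph{full-data} empirical gradient $\nabla_{\theta}\widehat{L}_N$, and then to invoke the ODE method with $\widehat{L}_N$ playing the role of a Lyapunov function. The key preliminary observation is the conditional unbiasedness of the Horvitz--Thompson gradient estimator: for each fixed $\theta$, the sample drawn from $R_N(\theta)$ satisfies $\mathbb{E}[\epsilon_i\mid \mathcal{D}_N]=\pi_i(\theta)$, so that
\begin{equation*}
\mathbb{E}\left[\bar{l}^{HT}_{\pi}(\theta)\,\big|\,\mathcal{D}_N\right]=\frac{1}{N}\sum_{i=1}^{N}\nabla_{\theta}\psi(Z_i,\theta)=\widehat{l}_N(\theta)=\nabla_{\theta}\widehat{L}_N(\theta).
\end{equation*}
Introducing the filtration $\mathcal{F}_t=\sigma\big(\mathcal{D}_N,\widehat{\theta}(0),\boldsymbol{\epsilon}_N^{(0)},\ldots,\boldsymbol{\epsilon}_N^{(t-1)}\big)$, for which $\widehat{\theta}(t)$ is $\mathcal{F}_t$-measurable and, given $\mathcal{F}_t$, the sample $S_t$ is drawn from $R_N(\widehat{\theta}(t))$, I would rewrite the update of Fig.~\ref{HTGDalgo} as
\begin{equation*}
\widehat{\theta}(t+1)=\widehat{\theta}(t)-\gamma(t)\,\nabla_{\theta}\widehat{L}_N(\widehat{\theta}(t))-\gamma(t)\,\zeta(t+1),\qquad \zeta(t+1)\defeq \bar{l}^{HT}_{\pi}(\widehat{\theta}(t))-\widehat{l}_N(\widehat{\theta}(t)),
\end{equation*}
where, by the display above, $(\zeta(t+1))_{t\geq 0}$ is a martingale-difference sequence with respect to $(\mathcal{F}_t)_{t\geq 0}$, i.e. $\mathbb{E}[\zeta(t+1)\mid\mathcal{F}_t]=0$ (conditionally on $\mathcal{D}_N$).

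Next I would control the perturbation. Using the stability hypothesis, fix the compact set $\mathcal{K}$ with $\widehat{\theta}(t)\in\mathcal{K}$ for all $t$; by the second item of Assumption~\ref{assumption_1}, the quantity $M\defeq\max_{1\leq i\leq N}\sup_{\theta\in\mathcal{K}}\frac{\|\nabla_{\theta}\psi(Z_i,\theta)\|}{\pi_i(\theta)}$ is finite with probability one, and since $\epsilon_i^{(t)}\in\{0,1\}$ this forces $\|\bar{l}^{HT}_{\pi}(\widehat{\theta}(t))\|\leq M$; moreover $\nabla_{\theta}\widehat{L}_N$ is continuous (first item of Assumption~\ref{assumption_1}) hence bounded on $\mathcal{K}$. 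It follows that $\mathbb{E}[\|\zeta(t+1)\|^2\mid\mathcal{F}_t]\leq C$ almost surely for a finite constant $C$ depending only on $\mathcal{D}_N$, so that, in view of the assumed decay of the learning rate, $\sum_{t}\gamma^2(t)\,\mathbb{E}[\|\zeta(t+1)\|^2\mid\mathcal{F}_t]<+\infty$ almost surely. All the standing hypotheses of the classical theory of stochastic approximation with martingale-difference perturbations are thus in force.

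I would then conclude through the ODE method. The mean field $-\nabla_{\theta}\widehat{L}_N$ is continuous and $\widehat{L}_N$ is a strict Lyapunov function for the associated flow $\dot{\theta}=-\nabla_{\theta}\widehat{L}_N(\theta)$, since $\frac{d}{ds}\widehat{L}_N(\theta(s))=-\|\nabla_{\theta}\widehat{L}_N(\theta(s))\|^2\leq 0$, with equality exactly on $\mathcal{L}_N$. A standard convergence result (Kushner--Clark type, or Robbins--Siegmund combined with the Lyapunov argument, see \cite{KYBook}) then yields that, conditionally on $\mathcal{D}_N$, the bounded iterates $\{\widehat{\theta}(t)\}$ converge almost surely to a connected subset of $\mathcal{L}_N$ along which $\widehat{L}_N$ is constant. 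The finite cardinality of $\mathcal{L}_N$ (third item of Assumption~\ref{assumption_1}) reduces any such connected set to a singleton, which gives the claim.

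The main obstacle is the bookkeeping forced by the fact that the sampling design used at iteration $t$ is itself $\theta$-dependent, namely $R_N(\widehat{\theta}(t))$: one must check carefully that the conditional unbiasedness, established for each \emph{fixed} $\theta$, legitimately transfers to the random argument $\widehat{\theta}(t)$ once one conditions on $\mathcal{F}_t$, which amounts to specifying that, given $\mathcal{F}_t$, the sampling indicators $\boldsymbol{\epsilon}_N^{(t)}$ depend on the past only through $\widehat{\theta}(t)$. A secondary, minor point is that Assumption~\ref{assumption_1} grants only $\mathcal{C}^1$ smoothness, so one should appeal to a version of the convergence theorem requiring merely continuity — rather than Lipschitz continuity — of the mean field.
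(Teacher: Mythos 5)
Your proposal is correct and takes essentially the same route as the paper's own proof: the paper likewise rewrites the update as a Robbins--Monro scheme with mean field $-\nabla_{\theta}\widehat{L}_N$, checks that the Horvitz--Thompson noise term is a conditionally centered martingale-difference sequence with bounded conditional second moment (via the second item of Assumption~\ref{assumption_1} and the stability hypothesis), and concludes by citing a standard stochastic-approximation convergence theorem. The only cosmetic difference is that you spell out the Lyapunov/ODE argument and the reduction to a singleton via finiteness of $\mathcal{L}_N$, which the paper delegates to the cited references.
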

The stability condition is generally difficult to check. In practice, one may guarantee it by confining the sequence to a compact set fixed in advance and using a \textit{projected} version of the algorithm above. For simplicity, the present study is restricted to the simplest framework for stochastic gradient descent and we refer to \cite{KYBook} or \cite{Borkar} (see section 5.4 therein) for further details.

Consider $\theta^*\in \mathcal{L}$. The following \textit{local} assumptions are also required to establish asymptotic normality results conditioned upon the event $\mathcal{E}(\theta^*)=\{ \lim_{t\rightarrow +\infty}\widehat{\theta}(t)= \theta^*\}$.
\begin{assumption}\label{assumption_2} The conditions below hold true.
  \begin{itemize}
  \item There exists a neighborhood $\mathcal{V}$ of $\theta^*$ such that for all $z$, the mapping $\theta\mapsto \psi(z,\theta)$ is of class $\mathcal{C}^2$ on $\mathcal{V}$.
  \item The Hessian matrix $H=\nabla^2_{\theta}\widehat{L}_N(\theta^*)$ is a stable $q\times q$  positive-definite matrix: its smallest  eigenvalue is $l$ with $l>0$.
  \item For all $(i,j)\in\{1,\; \ldots,\; N  \}^2$, the mapping $\theta\in \mathcal{V}\mapsto \pi_{i,j}(\theta)$ is continuous.
  \end{itemize}
\end{assumption}
\begin{theorem} \label{thm:CLT}{\sc (A conditional CLT)} Suppose that Assumptions \ref{assumption_1}-\ref{assumption_2} are fulfilled and that $\gamma(t)=\gamma(0) t^{-\alpha}$ for some constants $\gamma(0)>0$ and $\alpha\in (1/2,1]$. When $\alpha=1$, take $\gamma(0)>1/(2l)$ and $\eta=1/(2\gamma(0))$. Set $\eta=0$ otherwise. Given the observations $Z_1,\; \ldots,\; Z_N$ (respectively, $(Z_1,W_1),\; \ldots,\; (Z_N,W_N)$) and conditioned upon the event $\mathcal{E}(\theta^*)$, we have the convergence in distribution as $t\rightarrow +\infty$
$$
\sqrt{1/\gamma(t)}\left(\widehat{\theta}(t)-\theta^*  \right)\Rightarrow \mathcal{N}(0, \Sigma_{\pi}),
$$
where the asymptotic covariance matrix $\Sigma_{\pi}$ is the unique solution of the Lyapunov equation:
\begin{equation}\label{eq:Lyapounov}
H\Sigma +\Sigma H+2\eta\Sigma=\Gamma^*,
\end{equation}
with
\begin{multline}\label{eq:asympt_var}
\Gamma^*=\frac{1}{N^2}\sum_{i=1}^N \frac{1-\pi_i(\theta^*)}{\pi_i(\theta^*)}\nabla_{\theta}\psi(Z_i,\theta^*) \nabla_{\theta}\psi(Z_i,\theta^*)^T\\
+\frac{1}{N^2}\sum_{i\neq j}\frac{\pi_{i,j}(\theta^*)}{\pi_{i}(\theta^*)\pi_{j}(\theta^*)}\nabla_{\theta}\psi(Z_i,\theta^*) \nabla_{\theta}\psi(Z_j,\theta^*)^T.
\end{multline}
\end{theorem}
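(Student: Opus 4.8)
\proofsketch
The plan is to recognise the {\sc HTGD} recursion as a Robbins--Monro stochastic approximation scheme, conditionally on the data $\mathcal{D}_N$, and to apply a classical central limit theorem for such schemes (as in \cite{KYBook}, Chapter~10, or \cite{Borkar}, Chapter~8). Write the update as $\widehat{\theta}(t+1)=\widehat{\theta}(t)-\gamma(t)\left(h(\widehat{\theta}(t))+\xi_{t+1}\right)$, where $h(\theta)=\widehat{l}_N(\theta)=\nabla_{\theta}\widehat{L}_N(\theta)$ is the mean field, which is \emph{deterministic} given $\mathcal{D}_N$, and $\xi_{t+1}=\bar{l}^{HT}_{\pi}(\widehat{\theta}(t))-h(\widehat{\theta}(t))$ is the fluctuation of the Horvitz--Thompson gradient estimate \eqref{eq:HT_gradient}. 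Since a fresh survey sample $S_t$ is drawn at step $t$ from $R_N(\widehat{\theta}(t))$, independently of the past given $\widehat{\theta}(t)$, and since the HT estimator is conditionally unbiased for the total, one has $\mathbb{E}[\xi_{t+1}\mid\mathcal{F}_t]=0$ with $\mathcal{F}_t=\sigma(\mathcal{D}_N,\widehat{\theta}(0),\ldots,\widehat{\theta}(t))$, so that $(\xi_{t+1})_{t\geq 0}$ is a martingale difference sequence. Because $N$ is fixed and everything is conditioned on $\mathcal{D}_N$, there is no empirical-process or growing-sample-size subtlety: this is a purely finite-dimensional stochastic approximation problem, and Theorem~\ref{thm:consistency} already secures almost-sure convergence of the iterates, so that $\mathcal{E}(\theta^*)$ is the relevant localising event.

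First I would localise near $\theta^*$. On $\mathcal{E}(\theta^*)$ the iterates eventually lie in the neighbourhood $\mathcal{V}$ of Assumption~\ref{assumption_2}, where $h$ is $\mathcal{C}^1$ with $h(\theta^*)=0$ and Jacobian $H=\nabla^2_{\theta}\widehat{L}_N(\theta^*)$, a positive-definite matrix with smallest eigenvalue $l>0$: this is the required stability (Hurwitz) condition on the mean field. The conditional covariance of the noise given $\mathcal{F}_t$ is the conditional covariance matrix of $\bar{l}^{HT}_{\pi}(\widehat{\theta}(t))$ given $\mathcal{D}_N$, namely $\Gamma_N(\widehat{\theta}(t))$ with
\[
\Gamma_N(\theta)=\frac{1}{N^2}\sum_{1\leq i,j\leq N}\frac{\pi_{i,j}(\theta)-\pi_i(\theta)\pi_j(\theta)}{\pi_i(\theta)\pi_j(\theta)}\,\nabla_{\theta}\psi(Z_i,\theta)\,\nabla_{\theta}\psi(Z_j,\theta)^{T},
\]
as dictated by \eqref{eq:var_cond}. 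This map is continuous at $\theta^*$ by the $\mathcal{C}^1$ hypothesis on $\psi$ in Assumption~\ref{assumption_1} and the continuity of $\theta\mapsto\pi_{i,j}(\theta)$ in Assumption~\ref{assumption_2}, hence $\Gamma_N(\widehat{\theta}(t))\to\Gamma_N(\theta^*)$ on $\mathcal{E}(\theta^*)$; invoking the stationarity identity $\sum_{i=1}^N\nabla_{\theta}\psi(Z_i,\theta^*)=0$ to rewrite the off-diagonal part identifies $\Gamma_N(\theta^*)$ with the matrix $\Gamma^*$ of \eqref{eq:asympt_var}. The Lindeberg/uniform-integrability requirement is essentially free here: given $\mathcal{D}_N$, $\bar{l}^{HT}_{\pi}(\theta)$ is bounded uniformly over $\theta$ in the confining compact $\mathcal{K}$ by the second bullet of Assumption~\ref{assumption_1}, so $\xi_{t+1}$ is a bounded random vector; since $\gamma(t)\to 0$, the truncated second moment $\mathbb{E}[\,\|\xi_{t+1}\|^2\mathbb{I}\{\|\xi_{t+1}\|>\varepsilon/\sqrt{\gamma(t)}\}\mid\mathcal{F}_t]$ vanishes for $t$ large, and all higher conditional moments are finite.

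With these ingredients in place I would apply the stochastic approximation CLT for decreasing gains $\gamma(t)=\gamma(0)t^{-\alpha}$, $\alpha\in(1/2,1]$. Linearising around $\theta^*$ gives $\widehat{\theta}(t+1)-\theta^*=(I-\gamma(t)H)(\widehat{\theta}(t)-\theta^*)-\gamma(t)\xi_{t+1}-\gamma(t)r_t$ with $r_t=o(\|\widehat{\theta}(t)-\theta^*\|)$; expanding this recursion, the leading contribution is the weighted martingale $\sum_{s\leq t}\gamma(s)\,\Phi_{t,s}\,\xi_{s+1}$ built from the transition matrices $\Phi_{t,s}$ of the linearised dynamics, to which a martingale CLT applies, delivering the claimed conditional convergence $\sqrt{1/\gamma(t)}(\widehat{\theta}(t)-\theta^*)\Rightarrow\mathcal{N}(0,\Sigma_{\pi})$ on $\mathcal{E}(\theta^*)$ (and given $\mathcal{D}_N$), with $\Sigma_{\pi}$ the unique solution of the Lyapunov equation \eqref{eq:Lyapounov}. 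The condition $\gamma(0)>1/(2l)$ when $\alpha=1$ (equivalently $l>\eta$ with $\eta=1/(2\gamma(0))$) is precisely what makes $H$ adjusted by $\eta$ a stable matrix, hence \eqref{eq:Lyapounov} uniquely solvable with a positive semidefinite solution, and simultaneously what allows the optimal rate $t^{\alpha/2}$ to be attained; for $\alpha<1$ the gain plays no role in the drift and $\eta=0$.

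The hard part will be the localisation/linearisation step: since the smoothness of $\psi$, hence of $h$, is only \emph{local}, one must show that the remainder $r_t$ does not contaminate the rescaled error. This calls for first establishing an almost-sure rate $\|\widehat{\theta}(t)-\theta^*\|=o(\gamma(t)^{1/2-\delta})$ for some small $\delta>0$ — exactly what the rate-bound analysis of the {\sc HTGD} algorithm carried out in the Appendix provides — and then combining it with the usual Toeplitz/Gronwall bookkeeping on $\Phi_{t,s}$ and a careful treatment of the conditioning on $\mathcal{E}(\theta^*)$ (working on that event after a random localisation time, so that the recursion may legitimately be replaced by its linearisation). By contrast, the martingale-difference structure, the conditional unbiasedness, the identification of $\Gamma^*$, and the moment bounds are all immediate consequences of the survey-sampling set-up together with Assumptions~\ref{assumption_1}--\ref{assumption_2}, so that, past the localisation, the argument is an instance of the textbook stochastic approximation CLT. \qed
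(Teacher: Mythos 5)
Your proposal is correct and follows essentially the same route as the paper: both recognise the {\sc HTGD} recursion as a Robbins--Monro scheme whose mean field $\nabla_{\theta}\widehat{L}_N$ is deterministic given $\mathcal{D}_N$ and whose noise is a martingale-difference sequence, both check conditional unbiasedness and the uniform moment bounds supplied by Assumption \ref{assumption_1}, both use the continuity of $\theta\mapsto\pi_{i,j}(\theta)$ to get $\Gamma(\widehat{\theta}(t))\rightarrow\Gamma(\theta^*)$ on $\mathcal{E}(\theta^*)$, and both then invoke an off-the-shelf CLT for stochastic approximation (the paper applies Theorem 1 of Pelletier (1998); your Kushner--Yin/Borkar references are the same family of results, and you are in fact more explicit than the paper about the linearisation and localisation step). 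One caveat on the identification of the limit covariance: your expression $\Gamma_N(\theta)=\frac{1}{N^2}\sum_{i,j}\frac{\pi_{i,j}(\theta)-\pi_i(\theta)\pi_j(\theta)}{\pi_i(\theta)\pi_j(\theta)}\nabla_{\theta}\psi(Z_i,\theta)\nabla_{\theta}\psi(Z_j,\theta)^T$ is the correct conditional covariance, but the claimed identification with the displayed $\Gamma^*$ via $\sum_i\nabla_{\theta}\psi(Z_i,\theta^*)=0$ does not come out exactly: the off-diagonal $-1$'s contribute $+\frac{1}{N^2}\sum_i\nabla_{\theta}\psi(Z_i,\theta^*)\nabla_{\theta}\psi(Z_i,\theta^*)^T$, so that $\Gamma_N(\theta^*)=\Gamma^*+\frac{1}{N^2}\sum_i\nabla_{\theta}\psi(Z_i,\theta^*)\nabla_{\theta}\psi(Z_i,\theta^*)^T$, i.e.\ the true limit is $\frac{1}{N^2}\sum_{i,j}\frac{\pi_{i,j}(\theta^*)}{\pi_i(\theta^*)\pi_j(\theta^*)}\nabla_{\theta}\psi(Z_i,\theta^*)\nabla_{\theta}\psi(Z_j,\theta^*)^T$ (numerator $1$, not $1-\pi_i$, on the diagonal). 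This is a discrepancy in the paper's displayed formula \eqref{eq:asympt_var} --- which is itself inconsistent with the covariance written in the consistency proof --- rather than a gap in your argument; you should simply not assert that the two expressions coincide without flagging the residual term.
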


The result stated below provides the asymptotic conditional distribution of the error. Its proof is a direct application of the second order delta method and is left to the reader.
\begin{corollary}{\sc (Error rate)} Under the hypotheses of Theorem \ref{thm:CLT}, given the observations $Z_1,\; \ldots,\; Z_N$ (respectively, $(Z_1,W_1),\; \ldots,\; (Z_N,W_N)$) and conditioned upon the event $\mathcal{E}(\theta^*)$, we have the convergence in distribution towards a non-central chi-square distribution:
$$
1/\gamma(t)\left(\widehat{L}_N(\widehat{\theta}(t))-\widehat{L}_N(\theta^*)  \right)\Rightarrow \frac{1}{2}U^T \Sigma_{\pi}^{1/2}H\Sigma_{\pi}^{1/2}U,
$$
as $t\rightarrow +\infty$, where $U$ is a $d$-dimensional Gaussian centered r.v. with the identity as covariance matrix.
\end{corollary}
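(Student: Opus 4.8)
The plan is to obtain the result from the conditional central limit theorem of Theorem~\ref{thm:CLT} through a second-order Taylor expansion of $\widehat{L}_N$ at $\theta^*$, combined with the continuous mapping theorem and Slutsky's lemma; this is the ``second order delta method'' alluded to in the statement.

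First I would note that, on the event $\mathcal{E}(\theta^*)=\{\widehat\theta(t)\to\theta^*\}$, the iterates eventually lie in the neighborhood $\mathcal{V}$ on which $\theta\mapsto\psi(z,\theta)$ is $\mathcal{C}^2$ (first bullet of Assumption~\ref{assumption_2}), so that $\widehat L_N$ is twice continuously differentiable near $\theta^*$ with Hessian $H=\nabla^2_\theta\widehat L_N(\theta^*)$. Since $\theta^*\in\mathcal{L}_N$ is stationary, $\nabla_\theta\widehat L_N(\theta^*)=0$, and Taylor's formula (using the continuity of $\nabla^2_\theta\widehat L_N$ at $\theta^*$) gives
$$
\widehat L_N(\widehat\theta(t))-\widehat L_N(\theta^*)=\tfrac12\,(\widehat\theta(t)-\theta^*)^T H\,(\widehat\theta(t)-\theta^*)+R_t,
$$
with $R_t=o\!\left(\Vert\widehat\theta(t)-\theta^*\Vert^2\right)$ almost surely on $\mathcal{E}(\theta^*)$ as $t\to+\infty$.

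Next, multiplying by $1/\gamma(t)$ and writing $\Delta_t:=\sqrt{1/\gamma(t)}\,(\widehat\theta(t)-\theta^*)$, the quantity of interest becomes
$$
\tfrac1{\gamma(t)}\left(\widehat L_N(\widehat\theta(t))-\widehat L_N(\theta^*)\right)=\tfrac12\,\Delta_t^T H\,\Delta_t+\tfrac1{\gamma(t)}\,R_t.
$$
By Theorem~\ref{thm:CLT}, conditioned upon the data and upon $\mathcal{E}(\theta^*)$, one has $\Delta_t\Rightarrow\mathcal N(0,\Sigma_\pi)$; in particular $\Delta_t=O_{\mathbb P}(1)$, so that $\Vert\widehat\theta(t)-\theta^*\Vert^2=O_{\mathbb P}(\gamma(t))$ and hence $R_t/\gamma(t)=o_{\mathbb P}(1)$. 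Since $x\mapsto\tfrac12\,x^T H x$ is continuous, the continuous mapping theorem yields $\tfrac12\,\Delta_t^T H\,\Delta_t\Rightarrow\tfrac12\,Z^T H Z$ with $Z\sim\mathcal N(0,\Sigma_\pi)$, and Slutsky's lemma removes the $o_{\mathbb P}(1)$ term. Finally, writing $Z=\Sigma_\pi^{1/2}U$ with $U$ a standard Gaussian vector (of dimension $q=\dim\Theta$) and using symmetry of the square root identifies the limit as $\tfrac12\,U^T\Sigma_\pi^{1/2}H\Sigma_\pi^{1/2}U$, which is the claimed distribution.

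The only step requiring a little care is the control of the remainder: one must combine the almost-sure Taylor bound $R_t=o(\Vert\widehat\theta(t)-\theta^*\Vert^2)$, valid on $\mathcal{E}(\theta^*)$, with the \emph{stochastic} rate $\Vert\widehat\theta(t)-\theta^*\Vert^2=O_{\mathbb P}(\gamma(t))$ furnished by Theorem~\ref{thm:CLT}, so as to conclude that $R_t/\gamma(t)\to 0$ in conditional probability. Everything else reduces to standard invocations of the continuous mapping theorem and of Slutsky's lemma.
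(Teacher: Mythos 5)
Your argument is correct and is precisely the ``second order delta method'' that the paper invokes (the paper leaves the proof to the reader): Taylor expansion at the stationary point $\theta^*$ where the gradient vanishes, the conditional CLT of Theorem~\ref{thm:CLT} for the quadratic term, and tightness of $\sqrt{1/\gamma(t)}(\widehat\theta(t)-\theta^*)$ to kill the remainder. Your parenthetical remark that $U$ should be $q$-dimensional (the dimension of $\Theta$) rather than $d$-dimensional is also a fair correction of a typo in the statement.
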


Before showing how the results above can be used to understand how specific sampling designs may improve statistical analysis, a few comments are in order.

\begin{remark}{(Asymptotic covariance estimation)} An estimate of $\Sigma_{\pi}$ could be obtained by solving the equation $\Sigma H+H\Sigma +2\eta\Sigma=\Gamma(\widehat{\theta}(T))$, replacing in \eqref{eq:asympt_var} the (unknown) target value $\theta^*$ by the estimate produced by the {\sc HTGD} algorithm after $T$ iterations. Alternatively, a percentile Bootstrap method could be also used for this purpose, repeating $B\geq 1$ times the {\sc HTGD} algorithm based on replicates of the original sample $\mathcal{D}_N$.
\end{remark}

For completeness, a rate bound analysis of the {\sc HTGD} algorithm is also provided in the Appendix section.

\subsection{Asymptotic covariance optimization in the Poisson case}\label{subsec:opt}

Now that the limit behavior of the solution produced by the {\sc HTGD} algorithm has been described for general collections of survey designs $\mathcal{R}=\{R_N(\theta)\}_{\theta\in \Theta}$ of fixed expected sample size, we turn to the problem of finding survey plans yielding estimates with minimum variability. Formulating this objective in a quantitative manner, this boils down to finding $\mathcal{R}$ so as to minimize $\vert\vert \Sigma_{\pi}^{1/2}\vert\vert$, for an appropriately chosen norm $\vert\vert .\vert\vert$ on the space $\mathcal{M}_q(\mathbb{R})$ of $q\times q$ matrices with real entries for instance. In order to get a natural summary of the asymptotic variability, we consider here the Hilbert-Schmidt norm, \textit{i.e.} $\vert\vert A\vert\vert_{HS}=\sqrt{Tr(AA^T)}=(\sum_{i,j}a_{i,j}^2)^{1/2}$ for any $A=(a_{i,j})\in \mathcal{M}_d(\mathbb{R})$ where $Tr(.)$ denotes the Trace operator. For simplicity's sake, we focus on Poisson schemes and consider the case where $\eta=0$. Let $\mathcal{P}=\{\mathbf{p}(\theta)=(p_1(\theta),\; \ldots,\; p_N(\theta) \}_{\theta\in \Theta}$ be a collection of first order inclusion probabilities.
The following result exhibits an optimal collection of Poisson schemes among those with $N_0$ as expected sizes, in the sense that it yields an HTGD estimator with an asymptotic covariance of square root with minimum Hilbert-Schmidt norm. We point out that it is generally different from that considered in subsection \ref{subsec:optimal}, revealing the difference between the issue of estimating the empirically gradient accurately by means of a Poisson Scheme and that of optimizing the HTGD procedure.
\begin{proposition}\label{prop:optimal}{\sc (Optimality)}
Let $Q=H^{-1/2}$. The collection $\mathbf{p}^*$ of Poisson designs defined by: $\forall i\in\{1,\;\ldots,\; N  \}$, $\forall \theta\in \Theta$,
$$
p_i^*(\theta)=N_0\frac{\vert\vert Q\nabla_{\theta}\psi(Z_i,\theta)\vert\vert }{\sum_{j=1}^N \vert\vert Q\nabla_{\theta}\psi(Z_j,\theta)\vert\vert}
$$
is a solution of the minimization problem
$$
\min_{\mathbf{p}}\vert\vert \Sigma^{1/2}_{\mathbf{p}}\vert\vert_{HS} \text{ subject to } \sum_{i=1}^Np_i(\theta)=N_0 \text{ for all } \theta \in \Theta,
$$
where the infimum is taken over all collections $\mathbf{p}$ of Poisson designs. In addition, we have
\begin{multline*}
2\vert\vert \Sigma_{\mathbf{p}^*}^{1/2}\vert\vert^2_{HS}=\frac{1}{N_0}\left(\frac{1}{N}\sum_{i=1}^N\vert\vert Q\nabla_{\theta}\psi(Z_i,\theta^*)\vert\vert\right)^2 \\
+\frac{2}{N^2} \sum_{i<j} (\nabla_{\theta}\psi(Z_i,\theta^*))^T H^{-1} \nabla_{\theta} \psi (Z_j,\theta^*) .
\end{multline*}
\end{proposition}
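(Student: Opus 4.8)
The plan is to collapse the matrix minimization to the same one-dimensional problem solved in \eqref{eq:Poisson_opt}, but with the gradient replaced by its image under $Q=H^{-1/2}$. Write $g_i=\nabla_{\theta}\psi(Z_i,\theta^*)$ throughout. The first step is to use the Poisson structure: for such a scheme $\pi_{i,j}(\theta)=p_i(\theta)p_j(\theta)$ whenever $i\neq j$, so the double sum in \eqref{eq:asympt_var} reduces to $(1/N^2)\sum_{i\neq j}g_ig_j^T$, which does not involve $\mathbf p$ at all. Hence $\Gamma^*=\frac{1}{N^2}\sum_{i=1}^N p_i(\theta^*)^{-1}\,g_ig_i^T+C$, with $C$ a symmetric matrix independent of the collection $\mathbf p$. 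Two consequences follow. First, $\Gamma^*$, hence $\Sigma_{\mathbf p}$, hence the objective $\vert\vert\Sigma_{\mathbf p}^{1/2}\vert\vert_{HS}$, depends on the whole family $\{\mathbf p(\theta)\}_{\theta\in\Theta}$ only through its value $\mathbf p(\theta^*)$ at the (fixed) limit point; the $\Theta$-indexed minimization therefore decouples into a single finite-dimensional problem for $\mathbf p(\theta^*)$, the values $\mathbf p(\theta)$ with $\theta\neq\theta^*$ being immaterial (the uniform-in-$\theta$ formula in the statement is just a convenient representative of the optimal family). Second, only the trace of the first term will matter for the optimization.

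Next I would turn the Hilbert--Schmidt objective into a trace of $\Gamma^*$. Since $\Sigma_{\mathbf p}^{1/2}$ is symmetric positive semi-definite, $\vert\vert\Sigma_{\mathbf p}^{1/2}\vert\vert_{HS}^2=Tr(\Sigma_{\mathbf p})$. Under Assumption \ref{assumption_2}, $H$ is positive definite, so with $\eta=0$ the Lyapunov equation \eqref{eq:Lyapounov} has the unique solution $\Sigma_{\mathbf p}=\int_0^{\infty}e^{-tH}\Gamma^*e^{-tH}\,dt$; using cyclicity of the trace together with $\int_0^{\infty}e^{-2tH}\,dt=\tfrac12H^{-1}$, this yields the key identity $2\,Tr(\Sigma_{\mathbf p})=Tr(\Gamma^*H^{-1})$ (equivalently: diagonalize $H=P\,\mathrm{diag}(\lambda_1,\dots,\lambda_q)\,P^T$ and read off $Tr(\Sigma_{\mathbf p})=\sum_k(P^T\Gamma^*P)_{kk}/(2\lambda_k)$). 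Substituting the reduced form of $\Gamma^*$ and using $Tr(g_ig_i^TH^{-1})=g_i^TH^{-1}g_i=\vert\vert Qg_i\vert\vert^2$, I obtain $2\vert\vert\Sigma_{\mathbf p}^{1/2}\vert\vert_{HS}^2=\frac{1}{N^2}\sum_{i=1}^N\vert\vert Qg_i\vert\vert^2/p_i(\theta^*)+Tr(CH^{-1})$, with the last term independent of $\mathbf p$.

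It then remains to solve $\min\sum_{i=1}^N\vert\vert Qg_i\vert\vert^2/p_i$ subject to $p_i\in(0,1)$ and $\sum_ip_i=N_0$. By Cauchy--Schwarz, $\big(\sum_i\vert\vert Qg_i\vert\vert\big)^2\leq\big(\sum_i\vert\vert Qg_i\vert\vert^2/p_i\big)\big(\sum_ip_i\big)=N_0\sum_i\vert\vert Qg_i\vert\vert^2/p_i$, with equality iff $p_i\propto\vert\vert Qg_i\vert\vert$; the size constraint then pins down $p_i^*(\theta^*)=N_0\vert\vert Qg_i\vert\vert/\sum_j\vert\vert Qg_j\vert\vert$ (a Lagrange-multiplier computation gives the same critical point, and convexity of $p\mapsto c/p$ on $(0,\infty)$ makes it the global minimizer). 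Running the same argument at every $\theta$ produces the announced collection $\mathbf p^*$. Finally, plugging $p_i^*$ back gives $\sum_i\vert\vert Qg_i\vert\vert^2/p_i^*=(1/N_0)\big(\sum_i\vert\vert Qg_i\vert\vert\big)^2$; combining this with $Tr(CH^{-1})$, whose off-diagonal part is $\sum_{i\neq j}Tr(g_ig_j^TH^{-1})=2\sum_{i<j}g_i^TH^{-1}g_j$ and whose $\big(\sum_ig_i\big)^TH^{-1}\big(\sum_ig_i\big)$ contribution vanishes because $\sum_ig_i=N\nabla_{\theta}\widehat{L}_N(\theta^*)=0$ (as $\theta^*\in\mathcal{L}_N$), yields, after collecting $\mathbf p$-independent terms, the closed-form expression for $2\vert\vert\Sigma_{\mathbf p^*}^{1/2}\vert\vert_{HS}^2$ displayed in the statement.

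Everything except the trace identity is routine linear algebra and calculus; the load-bearing ingredients are (i) the identity $Tr(\Sigma_{\mathbf p})=\tfrac12Tr(\Gamma^*H^{-1})$, i.e. extracting the trace of the solution of the Lyapunov equation, and (ii) the reduction showing that the objective depends on the collection $\mathbf p$ only through $\mathbf p(\theta^*)$, which is what makes the otherwise infinite-dimensional optimization tractable. A minor feasibility point must also be checked, namely that $p_i^*(\theta)<1$ for every $i$, which holds provided $N_0$ is small enough relative to the dispersion of the $\vert\vert Q\nabla_{\theta}\psi(Z_j,\theta)\vert\vert$'s; coordinates with $\nabla_{\theta}\psi(Z_i,\theta)=0$ are handled by the convention $0/0=0$.
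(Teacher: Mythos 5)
Your argument is essentially the paper's: both proofs rest on the identity $2\vert\vert \Sigma_{\mathbf p}^{1/2}\vert\vert_{HS}^2=Tr(\Sigma_{\mathbf p})+Tr(H^{-1}\Sigma_{\mathbf p}H)=Tr(H^{-1}\Gamma^*)$, reduce the $\mathbf p$-dependent part of this trace to $\frac{1}{N^2}\sum_{i=1}^N\vert\vert Q\nabla_{\theta}\psi(Z_i,\theta^*)\vert\vert^2/p_i(\theta^*)$ using the Poisson structure, and then optimize under the linear size constraint. Your two deviations are cosmetic but arguably improvements: the integral representation $\Sigma=\int_0^\infty e^{-tH}\Gamma^*e^{-tH}dt$ in place of taking the trace of $\Sigma+H^{-1}\Sigma H=H^{-1}\Gamma^*$ directly, and the Cauchy--Schwarz inequality with its equality case in place of the paper's Lagrange-multiplier computation (your route certifies \emph{global} optimality for free, and your explicit remark that the objective depends on the collection only through $\mathbf p(\theta^*)$, so the $\Theta$-indexed constraint decouples, is a point the paper leaves implicit). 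One loose end in your last paragraph: working literally from \eqref{eq:asympt_var}, the $\mathbf p$-independent remainder is $Tr(CH^{-1})=-\frac{1}{N^2}\sum_i\vert\vert Q\nabla_\theta\psi(Z_i,\theta^*)\vert\vert^2+\frac{2}{N^2}\sum_{i<j}\nabla_\theta\psi(Z_i,\theta^*)^TH^{-1}\nabla_\theta\psi(Z_j,\theta^*)$, and the extra diagonal term does not disappear via $\sum_i\nabla_\theta\psi(Z_i,\theta^*)=0$ as you suggest; the closed form displayed in the statement corresponds to identifying $Tr(H^{-1}\Gamma^*)$ with $\mathbb{E}[\Vert Q\bar l^{HT}_{\mathbf p}(\theta^*)\Vert^2]$, i.e.\ to a $1/\pi_i$ rather than $(1-\pi_i)/\pi_i$ diagonal weight, which is how the paper's own proof proceeds. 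Since this discrepancy is an additive constant independent of $\mathbf p$, it affects neither the optimality claim nor the minimizer, only the exact value of the constant in the second display.
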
 

Of course, the optimal solution exhibited in the result stated above is completely useless from a practical perspective, since the matrix $H$ is unknown in practice and the computation of the values taken by the gradient at each point $Z_i$ is precisely what we are trying to avoid in order to reduce the computational cost of the SGD procedure. In the next section, we show that choosing inclusion probabilities positively correlated with the $p^*_i(\theta)$'s is actually sufficient to reduce asymptotic variability (compared to the situation where equal inclusion probabilities are used). In addition, as illustrated by the two easily generalizable examples described in section \ref{sec:num}, such a sampling strategy can be implemented in many situations.

\subsection{Extensions to more general Poisson survey designs}\label{subsec:extension}
In this subsection, we still consider Poisson schemes and the case $\eta=0$ for simplicity and now place ourselves in the situation where the information at disposal consists of a collection of i.i.d. random pairs $(Z_1,W_1),\; \ldots,\; (Z_N,W_N)$ valued in $\mathbb{R}^d\times \mathbb{R}^{d'}$. We consider inclusion probabilities $$p_i(\theta)=N_0\frac{ p(W_i,\theta)}{\sum_{j=1}^Np(W_j,\theta)}$$ defined through a \textit{link function} $p:\mathbb{R}^{d'}\times \Theta\rightarrow (0,1)$, see Remark \ref{rk:aux}. The computational cost of $p(W_i,\theta)$ is assumed to be much smaller than that of $\nabla_{\theta}\psi(Z_i,\theta)$ (see the examples in section \ref{sec:num} below) for all $(i,\theta)\in \{1,\; \ldots,\; N  \}\times \Theta$. The assumption introduced below involves the empirical covariance $c_N(\theta)$ between $\vert\vert Q\nabla_{\theta}\psi(Z_,\theta)\vert\vert^{2}/p(W,\theta)$ and $p(W,\theta)$, for $\theta\in \Theta$. Observe that it can be written as:
\begin{multline*}
c_N(\theta)=\frac{1}{N}\sum_{i=1}^N \vert\vert Q\nabla_{\theta}\psi(Z_i,\theta)\vert\vert^{2}
-\frac{1}{N^2}\sum_{i=1}^N \frac{\vert\vert Q\nabla_{\theta}\psi(Z_i,\theta)\vert\vert^{2}}{p(W_i,\theta)} \sum_{i=1}^N p(W_i,\theta),
\end{multline*} 
with $\theta\in \Theta$.
\begin{assumption}\label{assumption:corr}
The link function $p(w,\theta)$ fulfills the following condition: 
$$ c_N(\theta^*)>0.$$
\end{assumption}

The result stated below reveals to which extent sampling with inclusion probabilities defined by some appropriate link function may improve upon sampling with equal inclusion probabilities, $\bar{p}_i=N_0/N$ for $1\leq i \leq n$, when implementing stochastic gradient descent. Namely, the accuracy of the HTGD gets closer and closer to the optimum, as the empirical covariance $c_N(\theta^*)$ increases to its maximum. Notice that in the case where inclusion probabilities are all equal, we have $c_N\equiv 0$.
\begin{proposition}\label{prop:gain} Let $N_0$ be fixed. Suppose that the collection of Poisson designs $\mathbf{p}$ with expected sizes $N_0$ is defined by a link function $p(w,\theta)$ satisfying Assumption \ref{assumption:corr}. Then, we have
$$
\vert\vert \Sigma_{\mathbf{p}}^{1/2}\vert\vert_{HS}< \vert\vert \Sigma_{\bar{\mathbf{p}}}^{1/2}\vert\vert_{HS},
$$
as well as
$$
0\leq \vert\vert \Sigma_{\mathbf{p}}^{1/2} \vert\vert^2_{HS}-\vert\vert \Sigma_{\mathbf{p}^*}^{1/2}  \vert\vert^2_{HS}
= \frac{1}{2N_0}\left\{ \sigma^2_N(\theta^*)-c_N(\theta^*)  \right\},
$$
where $\sigma^2_N(\theta)$ denotes the empirical variance of the r.v. $\vert\vert \nabla_{\theta}\psi(Z,\theta)\vert\vert$.
\end{proposition}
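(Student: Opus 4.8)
The plan is to first derive a closed form for $\vert\vert\Sigma_{\mathbf{p}}^{1/2}\vert\vert^2_{HS}$ valid for an arbitrary collection of Poisson designs with first order inclusion probabilities $p_i(\theta)$, $1\le i\le N$, and then obtain both assertions by elementary algebra. Since $\eta=0$ here, $\Sigma_{\mathbf{p}}$ is the unique symmetric positive semi-definite solution of the Lyapunov equation \eqref{eq:Lyapounov}, i.e. $H\Sigma+\Sigma H=\Gamma^*$, and as $\Sigma_{\mathbf{p}}$ is symmetric one has $\vert\vert\Sigma_{\mathbf{p}}^{1/2}\vert\vert^2_{HS}={\rm Tr}(\Sigma_{\mathbf{p}})$. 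I would multiply \eqref{eq:Lyapounov} on the left by $H^{-1}$, take the trace, and use its cyclic invariance to get $2\,{\rm Tr}(\Sigma_{\mathbf{p}})={\rm Tr}(H^{-1}\Gamma^*)$. It then remains to simplify $\Gamma^*$ at $\theta^*$: since $\theta^*\in\mathcal{L}_N$ we have $\sum_{i=1}^N\nabla_{\theta}\psi(Z_i,\theta^*)=0$, and since the scheme is Poisson, $\pi_{i,j}(\theta^*)=\pi_i(\theta^*)\pi_j(\theta^*)$ for $i\ne j$, so the double sum in \eqref{eq:asympt_var} collapses to $-N^{-2}\sum_i a_ia_i^T$ with $a_i:=\nabla_{\theta}\psi(Z_i,\theta^*)$, whence $\Gamma^*=N^{-2}\sum_{i=1}^N\bigl(1/p_i(\theta^*)-2\bigr)a_ia_i^T$. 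Using $H^{-1}=Q^2$ with $Q=Q^T$ and ${\rm Tr}(H^{-1}a_ia_i^T)=\vert\vert Qa_i\vert\vert^2$, this yields
$$
\vert\vert\Sigma_{\mathbf{p}}^{1/2}\vert\vert^2_{HS}=\frac{1}{2N^2}\sum_{i=1}^N\left(\frac{1}{p_i(\theta^*)}-2\right)\vert\vert Qa_i\vert\vert^2 .
$$

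Subtracting this identity, written for $\mathbf{p}$ and for the optimal collection $\mathbf{p}^*$ of Proposition \ref{prop:optimal}, the constant $-2$ cancels, leaving $\vert\vert\Sigma_{\mathbf{p}}^{1/2}\vert\vert^2_{HS}-\vert\vert\Sigma_{\mathbf{p}^*}^{1/2}\vert\vert^2_{HS}=\frac{1}{2N^2}\sum_i\bigl(1/p_i(\theta^*)-1/p_i^*(\theta^*)\bigr)\vert\vert Qa_i\vert\vert^2$. Substituting $p_i^*(\theta^*)=N_0\vert\vert Qa_i\vert\vert/\sum_j\vert\vert Qa_j\vert\vert$ gives $\frac{1}{2N^2}\sum_i\vert\vert Qa_i\vert\vert^2/p_i^*(\theta^*)=\frac{1}{2N_0}\bigl(N^{-1}\sum_i\vert\vert Qa_i\vert\vert\bigr)^2$; substituting $p_i(\theta^*)=N_0\,p(W_i,\theta^*)/\sum_j p(W_j,\theta^*)$ and recognising the definition of $c_N(\theta^*)$ recalled before Assumption \ref{assumption:corr} gives $\frac{1}{2N^2}\sum_i\vert\vert Qa_i\vert\vert^2/p_i(\theta^*)=\frac{1}{2N_0}\bigl(N^{-1}\sum_i\vert\vert Qa_i\vert\vert^2-c_N(\theta^*)\bigr)$. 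Combining,
$$
\vert\vert\Sigma_{\mathbf{p}}^{1/2}\vert\vert^2_{HS}-\vert\vert\Sigma_{\mathbf{p}^*}^{1/2}\vert\vert^2_{HS}=\frac{1}{2N_0}\left\{\sigma^2_N(\theta^*)-c_N(\theta^*)\right\},
$$
where $\sigma^2_N(\theta^*)$ is the empirical variance of $\vert\vert Q\nabla_{\theta}\psi(Z,\theta^*)\vert\vert$; non-negativity of the right-hand side is then immediate from the optimality of $\mathbf{p}^*$ asserted in Proposition \ref{prop:optimal}, or directly from the Cauchy--Schwarz inequality with the factorisation $\vert\vert Qa_i\vert\vert=\bigl(\vert\vert Qa_i\vert\vert/\sqrt{p(W_i,\theta^*)}\bigr)\sqrt{p(W_i,\theta^*)}$. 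For the strict inequality, I would note that the equal-probability design $\bar{\mathbf{p}}$ corresponds to $c_N\equiv 0$, so the last display with $\mathbf{p}=\bar{\mathbf{p}}$ reads $\vert\vert\Sigma_{\bar{\mathbf{p}}}^{1/2}\vert\vert^2_{HS}-\vert\vert\Sigma_{\mathbf{p}^*}^{1/2}\vert\vert^2_{HS}=\sigma^2_N(\theta^*)/(2N_0)$; subtracting this from the same display for a general $\mathbf{p}$ gives $\vert\vert\Sigma_{\mathbf{p}}^{1/2}\vert\vert^2_{HS}-\vert\vert\Sigma_{\bar{\mathbf{p}}}^{1/2}\vert\vert^2_{HS}=-c_N(\theta^*)/(2N_0)$, strictly negative under Assumption \ref{assumption:corr}.

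The computation is essentially mechanical once the right tool is in place; the only steps that are not purely routine are (i) the trace identity $2\,{\rm Tr}(\Sigma_{\mathbf{p}})={\rm Tr}(H^{-1}\Gamma^*)$ for the $\eta=0$ Lyapunov equation, and (ii) the use of the stationarity relation $\sum_i\nabla_{\theta}\psi(Z_i,\theta^*)=0$ together with the Poisson identity $\pi_{i,j}=\pi_i\pi_j$ to bring $\Gamma^*$ into the diagonal form $N^{-2}\sum_i(1/p_i-2)a_ia_i^T$. After that, matching the bookkeeping to the definitions of $c_N$ and $\sigma^2_N$ is direct, and the two sign statements follow respectively from Proposition \ref{prop:optimal} and Assumption \ref{assumption:corr}.
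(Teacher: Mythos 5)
Your proof is correct and follows essentially the same route as the paper: both reduce $2\Vert\Sigma_{\mathbf{p}}^{1/2}\Vert_{HS}^2$ to $\mathrm{Tr}(H^{-1}\Gamma^*)$ via the $\eta=0$ Lyapunov equation and then compare the only design-dependent part, $N^{-2}\sum_i\Vert Q\nabla_{\theta}\psi(Z_i,\theta^*)\Vert^2/p_i$, across $\mathbf{p}$, $\bar{\mathbf{p}}$ and $\mathbf{p}^*$, exactly matching the definitions of $c_N(\theta^*)$ and $\sigma^2_N(\theta^*)$. The only (harmless) difference is that you additionally use the stationarity relation $\sum_i\nabla_{\theta}\psi(Z_i,\theta^*)=0$ together with the Poisson factorization $\pi_{i,j}=\pi_i\pi_j$ to collapse the off-diagonal contribution of $\Gamma^*$ into a diagonal closed form, whereas the paper carries the cross-term sum $2\sum_{i<j}(Q\nabla_{\theta}\psi(Z_i,\theta^*))^T(Q\nabla_{\theta}\psi(Z_j,\theta^*))$ along and lets it cancel when taking differences; since that contribution does not depend on $\mathbf{p}$, both computations give identical values for every difference appearing in the statement.
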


As illustrated by the easily generalizable examples provided in the next section, one may generally find link functions fulfilling Assumption \ref{assumption:corr} without great effort, permitting to gain in accuracy from the implementation of the HTGD algorithm.

\section{Illustrative numerical experiments}\label{sec:num}
For illustration purpose, this section shows how the results previously established apply to two problems by means of simulation experiments. For both examples, the performance of the {\sc HTGD} algorithm is compared with that of a basic SGD strategy with the same (mean) sample size.

\subsection{Linear logistic regression}
Consider the linear logistic regression model corresponding to Example \ref{ex:logistic} with $\theta=(\alpha, \beta)\in \mathbb{R}\times \mathbb{R}^{d}$ and $h(x,\theta)=\alpha+ \beta^T x$ for all $x\in \mathbb{R}^d$. Let $X'$ be a low dimensional marginal vector of the input r.v. $X$, of dimension $d'<<d$ say, so that one may write $X=(X',X'')$ as well as $\beta=(\beta',\beta'')$ in a similar manner. The problem of fitting the parameter $\theta$ through conditional MLE corresponds to the case
\begin{equation*}
\psi((x,y),\theta)=-\log\left(\frac{e^{\alpha+\beta^Tx}(y+1)/2 +(1-y)/2}{1+e^{\alpha+\beta^Tx}}\right).
\end{equation*}
We propose to implement the HTGD with the link function $\widetilde{p}((x',y),\theta)=\vert\vert \nabla_{\theta}\psi'((X,Y),\theta) \vert\vert$, where 
\begin{equation*}
\psi'((x,y),\theta)=-\log\left(\frac{e^{\alpha+{\beta'}^Tx'}(y+1)/2 +(1-y)/2}{1+e^{\alpha+{\beta'}^Tx'})}\right).
\end{equation*}
In order to illustrate the advantages of the HTGD technique for logistic regression, we considered the toy numerical model with parameters  $d=11$ and $\theta= (\alpha,\; \beta_1,\dots,\beta_{10}) = (-9,0,3,-9,4,-9,15,0,-7,1,0)$, the $10$ input variables being independent, uniformly distributed on $(0,1)$. The maximum likelihood estimators of $\theta$ were computed using the HTGD and SGD (mini-batch) . In order to compare them, the same number of iterations was chosen in each situation and a learning rate proportionnal to $1/\sqrt{t}$ was considered.

 As a first go, we drew a single sample of size $N=5000$ on which the two algorithms were performed for $2000$ iterations. Two sub-sample sizes were considered : $n=10$ and $n=100$. As can be seen on Fig. \ref{fig:bigexb6}, while virtually equivalent in terms of computation time, thus taking a larger sample improves the efficiency of the HTGD. It also appears to reach a better level of precision in less steps than both competitors, a phenomenon that is consistent on all $11$ coordinates of $\theta$.
 
\begin{figure}%
\centering
 \parbox{2.5in}{\includegraphics[width =2.5in]{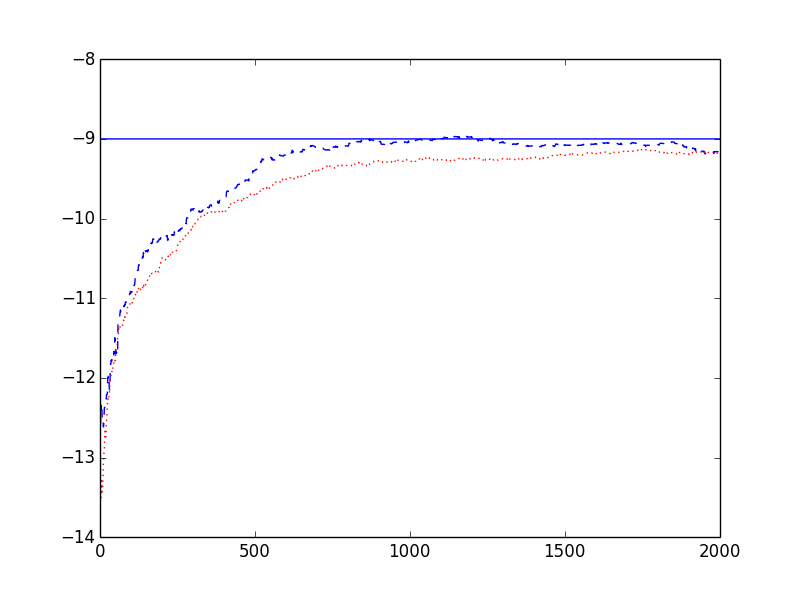}} \qquad
 \parbox{2.5in}{\includegraphics[width =2.5in]{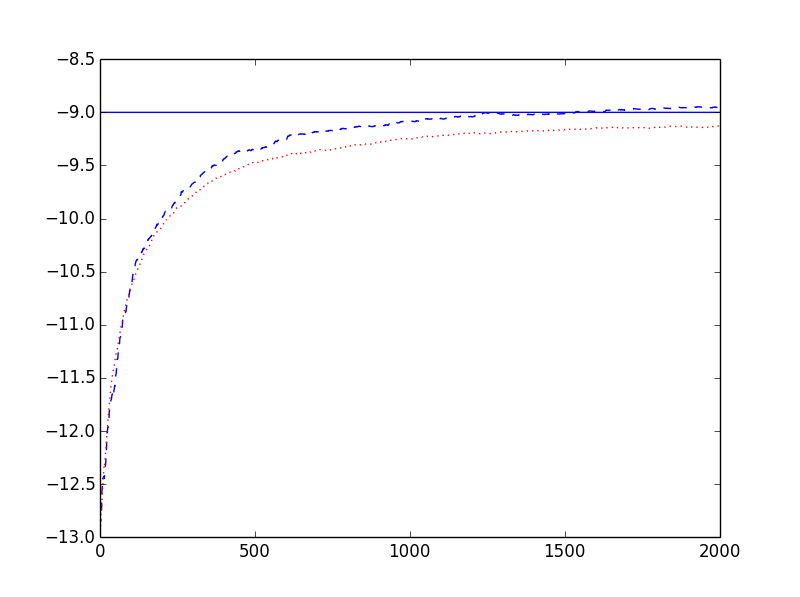}}
\caption{Evolution of the estimator of $\beta_5$ with the number of iterations in the HTGD (solid), mini-batch SGD (dotted) and GD (dashed) algorithms with $n = 10$ (left) and $n = 100$ (right)}
\label{fig:bigexb6}
     \end{figure}
     
So as to account for the randomness due to the data, we then simulated $50$ samples according to the model for two population sizes, $N = 500$ and $N = 1000$. For both the HTGD and the mini-batch SGD algorithms, a sub-sample size of $20$ was chosen. As shown in Table \ref{tb:sd}, the HTGD seems to be more robust to data randomness than SGD and GD. It is not surprising, since the sampling phase selects the most informative observations relative to the gradient descent, which makes HTGD less sensitive to the possible noise. It also provides more precise estimates, as suggested by the results in Table \ref{tb:avgvsspl}.

 \begin{table}[H]\begin{center}
 \begin{tabular}{l c c }
 \hline
  & $N = 500$ & $N = 1000$\\
 \hline
 HTGD & 1.52 & 1.45\\
 SGD & 2.21 & 2.09 \\

 \hline
 \end{tabular}
 \caption{Mean standard deviations of the final estimates of $\theta(=-9)$ across the $50$ simulations}
 \label{tb:sd}
 \end{center}
 \end{table}

 \begin{table}[H]\begin{center}
 \begin{tabular}{l c c c c c}
 \hline     
 & Min. & Median & Max. & Mean  & S.D.\\
 \hline
 \multicolumn{6}{c}{HTGD}\\
 \hline
 $\theta_5$ & -9.5 &  -8.7 & -7.8 & -8.6 & 1.45\\
 $\theta_6$ & 13.3  & 14.6 & 15.9 & 14.5 & 1.52\\
 \hline
  \multicolumn{6}{c}{SGD}\\
  \hline
  $\theta_5$ & -9.9&  -8.2 & -7.4 & -8.2 & 2.09\\
  $\theta_6$ & 12.7 & 13.9 & 16.6 & 15.2 & 2.21\\
 \end{tabular}
 
 \caption{Statistics on the global behavior of the final estimates of $\beta_5$ and $\beta_6$ across the $50$ simulations}
 \label{tb:avgvsspl}\end{center}
 \end{table}
 
 \begin{figure}%
\centering
 \parbox{5in}{\includegraphics[width =5in]{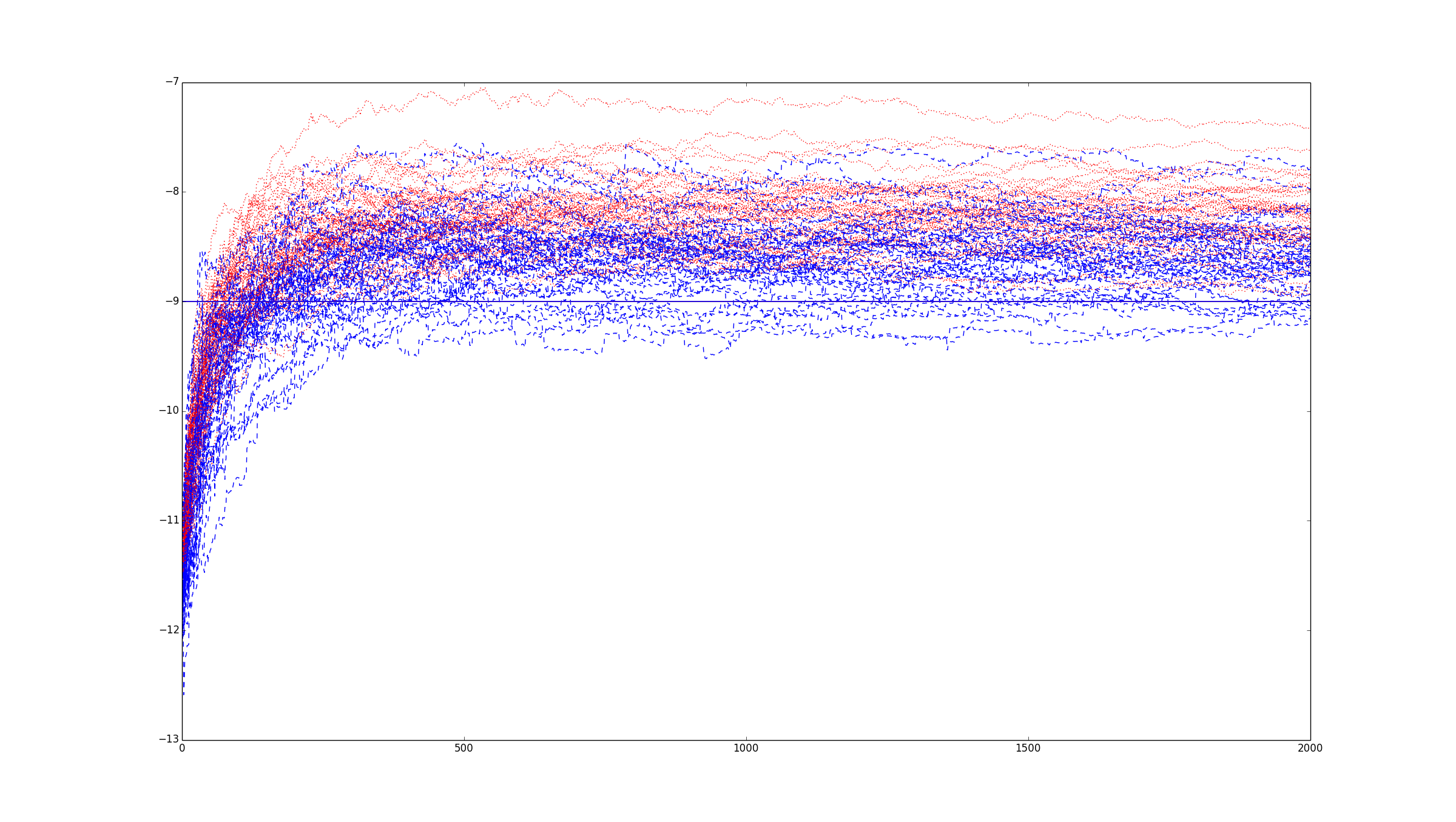}} \qquad
 \parbox{5in}{\includegraphics[width =5in]{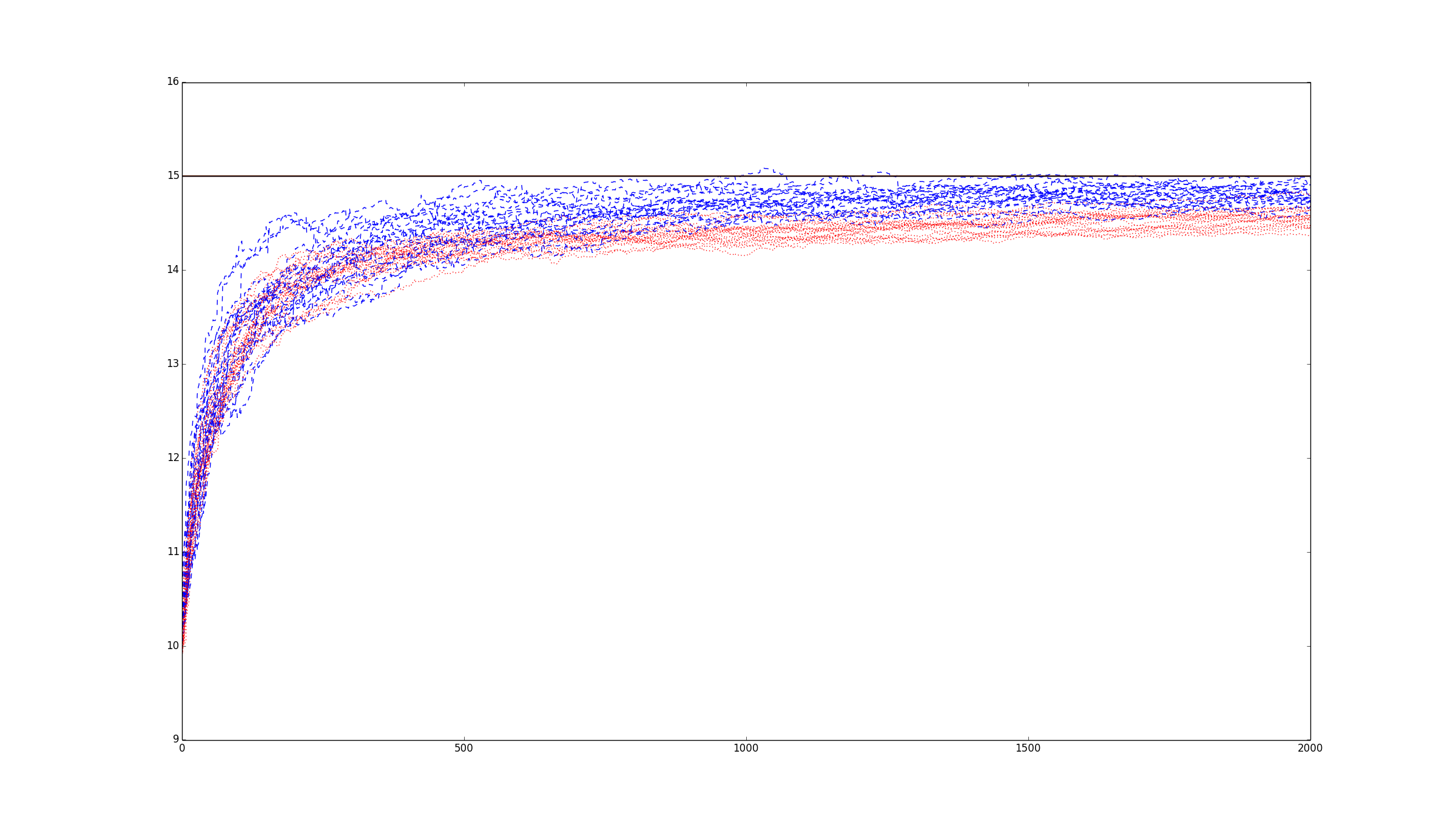}}
\caption{50 trajectories of the estimator of $\beta_5$ with the number of iterations in the HTGD (solid), mini-batch SGD (dotted) over 50 populations (left) and  of $\theta_6$ over 1 populations (right)}
\label{fig:bigexb6}
     \end{figure}

\subsection{The symmetric model} \label{ex:sym}

Consider now an i.i.d. sample $(X_{1},\; X_{2},\; \ldots,\; X_{N})$ drawn from an unknown probability distribution on $\mathbb{R}^d$, supposed to belong to the semi-parametric collection $\{P_{\theta ,f },\; \theta\in \Theta\} $, $\Theta\subset \mathbb{R}^d$,
dominated by some $\sigma$-finite measure $\lambda$. The related densities are denoted by $f(
x-\theta)$, where $\theta\in \Theta$ is a location parameter and a $f(x)$ a
(twice differentiable) density, symmetric about $0$, \textit{i.e.}$
f(x)=f(-x)$. The density $f$ is unknown in practice and may be multimodal. For simplicity, we assume here that $\Theta \subset \mathbb{R}$ but similar
arguments can be developed when $d>1$. For such a general semi-parametric model, it is well-known that neither the sample mean nor the median (if, for instance, the distribution does not weight the singleton $\{0\}$) are good candidates for estimating the location parameter $\theta$. In the
semiparametric literature this model is referred to as the \textit{symmetric model}, see \cite{BKRW93}. It is known that the tangent
space (\textit{i.e.} the set of scores) with respect to the parameter of interest $%
\theta $ and that with respect to the nuisance parameter are orthogonal.
The global tangent space at $P_{\theta ,f }$ is
given by 
\begin{equation*}
T_{L}\left[ P_{\theta ,f },\mathbb{P}\right] =\left\{ c\frac{f
^{\prime }\left( x-\theta \right) }{\eta \left( x-\theta \right) }%
+h(x-\theta );c\in \mathbb{R},\ h\in \dot{\mathbb{P}}_{2}\right\},  
\end{equation*}
where $\dot{\mathbb{P}}_{2}$ is the tangent space with respect to the
nuisance parameter:
\begin{equation*}
\dot{\mathbb{P}}_{2}=\left\{ h:\;\; \mathbb{E}_{P_{\theta ,f }}[h(X)]=0,\; h(x)=h(-x)\text{
and } \mathbb{E}_{P_{\theta ,f }}[h^{2}(X)]<\infty \right\}.
\end{equation*}%
Orthogonality simply results from the fact that $f'(x)$ is an odd function and implies that the parameter $\theta$ can be adaptively estimated, as if the density $f(x)$ was known, refer to \cite{BKRW93} for more details.
In practice $f(x)$ is estimated by means of some symmetrized kernel density
estimator. Given a Parzen-Rosenblatt kernel $K(x)$ (\textit{e.g.} a Gaussian kernel) for instance, consider the estimate
\[
\widetilde{f} _{\theta ,N}(x)=\frac{1}{Nh_{N}}\sum_{i=1}^{N}K\left( \frac{%
x-(X_{i}-\theta )}{h_{N}}\right),
\]
where $h_N>0$ is the smoothing bandwidth,
and form its symetrized version (which is an even function) 
\[
\widehat{f}_{\theta ,N}(x)=\frac{1}{2}\left( \widetilde{f} _{\theta ,N}(x) + \widetilde{f} _{\theta ,N}(-x) \right).
\]%
The related score is given by 
\[
\widehat{s}_{N}(x,\theta )=\frac{d}{d\theta }\widehat{f }_{\theta
,N}(x)/\widehat{f}_{\theta ,N}(x).
\]
In order to perform maximum likelihood estimation approximately, one can try to implement a gradient descent method to get an efficient estimator of $\theta$. For instance, for a reasonable sample size $%
N$, it is possible to show that, starting for instance from the empirical median $\theta _{0}$ with an adequate choice of the rate $\gamma _{t}$,
the sequence 
\[
\widehat{\theta}(t)=\widehat{\theta}(t-1)+\gamma _{t}\frac{1}{N}\sum_{j=1}^{N}\widehat{s}%
_{N}(X_{j}-\widehat{\theta}(t-1),\; \widehat{\theta}(t-1))
\]%
converges to the true MLE. The complexity of this algorithm is
typically of order $2T\times N^{2}$ if $T\geq 1$ is the number of iterations, due the tedious computations to evaluate the kernel density estimator (and its derivatives) at all points $X_{i}-\widehat{\theta}(t-1)$. It is thus relevant in this case
to try to reduce it by means of (Poisson) survey sampling. The iterations of such an
algorithm would be then of the form 
\begin{eqnarray*}
\widehat{\theta}(t) &=&\widehat{\theta}(t-1)+\gamma _{t}\frac{1}{N}\sum_{j=1}^{N}\frac{%
\varepsilon _{j}}{p _{j}}\widehat{s}%
_{N}(X_{j}-\widehat{\theta}(t-1),\; \widehat{\theta}(t-1)), \\
\sum_{j=1}^{N}p _{j} &=& n.
\end{eqnarray*}%
As shown in section \ref{subsec:opt}, the optimal choice would be
to choose $p _{j}$ proportional to $\vert \widehat{s}%
_{N}(X_{j}-\widehat{\theta}(t-1),\; \widehat{\theta}(t-1))\vert$ at the $t$-th iteration:
\begin{equation}\label{eq:sym_opt}
p^* _{j}\left(\widehat{\theta}(t-1)\right)=\frac{N_0\vert \widehat{s}%
_{N}(X_{j}-\widehat{\theta}(t-1),\; \widehat{\theta}(t-1))\vert }{\sum_{i=1}^{N}\vert \widehat{s}%
_{N}(X_{j}-\widehat{\theta}(t-1),\; \widehat{\theta}(t-1))\vert }.
\end{equation}
Unfortunately this is not possible because $s$ is unknown and replacing $s(x-\theta)$ by $\widehat{s}_{N}(x-\widehat{\theta}(t-1),\; \widehat{\theta}(t-1))$ in \eqref{eq:sym_opt} yields obvious computational difficulties. For this reason, we suggest to use the (much simpler) Poisson weights:
\[
p _{j}(\theta )=n\vert X_{j}-\theta \vert /\sum_{j=1}^{N}\vert X_{j}-\theta \vert.
\]

Fig. \ref{fig:Gauss_mixt} depicts the performance of the {\sc HTGD} algorithm when $\theta=0$ and $f(x)$ is a balanced mixture of two Gaussian densities with means $4$ and $-4$ respectively and same variance $\sigma^2=1$, compared to that of the usual SGD method. Based on a population sample of size $N=1000$, the {\sc HTGD} and SGD methods have been implemented with $n=10$ and $T=3000$ iterations, whereas $30$ iterations have been made for the basic GD procedure (with $n=N=1000$) so that the number of gradient computations is of the same order for each method. For each instance of the algorithms we took $\theta_0$ equal to the median of the population.

\begin{figure}%
\centering
 \parbox{5in}{\includegraphics[width =5in]{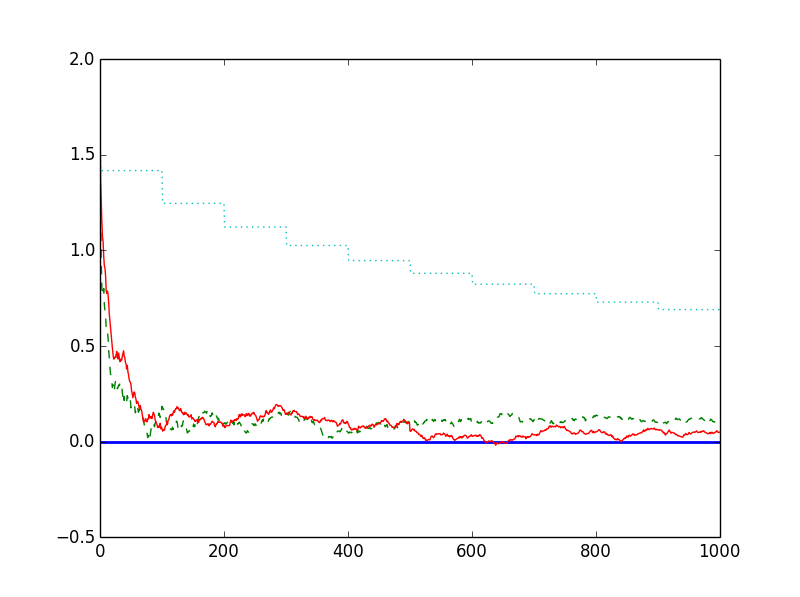}} \qquad
\caption{Evolution of the estimator of the location parameter $\theta=0$ of the balanced Gaussian mixture with the number of iterations in the HTGD (solid red), mini-batch SGD (dashed green) and GD (dotted blue) algorithms}
\label{fig:Gauss_mixt}
     \end{figure}

\begin{figure}%
\centering
 \parbox{5in}{\includegraphics[width =5in]{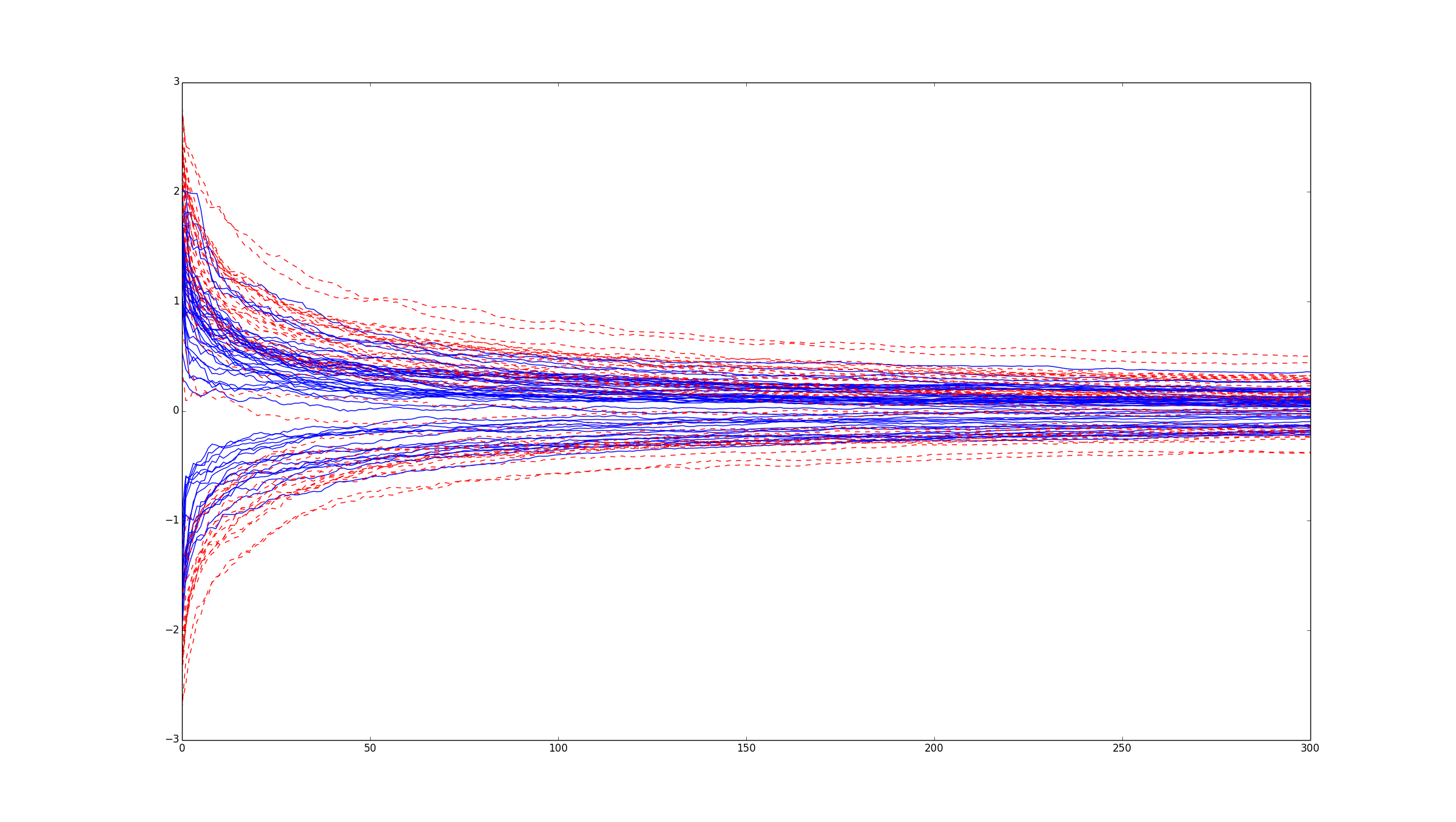}} \qquad
\caption{Evolution of the estimator of the location parameter $\theta=0$ of the balanced Gaussian mixture with the number of iterations in the HTGD (solid blue) and mini-batch SGD (dashed red) algorithms over 50 populations }
\label{fig:Gauss_mixt}
     \end{figure}
 \begin{table}[H]\begin{center}
 \begin{tabular}{l c c c c c}
 \hline     
 & Min. & Median & Max. & Mean  & S.D.\\
 \hline
 \multicolumn{6}{c}{HTGD}\\
 \hline
 $\theta$ & -0.35 &  0.006 & 0.29 & 0.014 & 0.16\\
 \hline
  \multicolumn{6}{c}{SGD}\\
  \hline
  $\theta$ & -0.38 &  -0.036 & 0.42 & 0.025 &0.22 \\
  \hline
  \multicolumn{6}{c}{GD}\\
  \hline
  $\theta$ & -0.52 &  -0.162 & 0.70 & 0.20 & 0.45\\
  \hline
 \end{tabular}
 
 \caption{Statistics on the global behavior of the final estimates of the location parameter across the $50$ simulations}
 \label{tb:avgvsspl}\end{center}
 \end{table}

\section{Conclusion}
Whereas massively parallelized/distributed approaches combined with random data splitting are now receiving much attention in the Big Data context, the present paper explores an alternative way of scaling up statistical learning methods, based on gradient descent techniques. It hopefully paves the way for incorporating efficiently survey techniques into machine-learning algorithms in order to exploit Big Data. Precisely, it shows how survey sampling can be used in order to improve the accuracy of the stochastic gradient descent method for a fixed number of iterations, while preserving the complexity of the procedure. Beyond theoretical limit results, the approach we promote is illustrated by promising numerical experiments.

\section*{Appendix A - Technical proofs}

\subsection*{Proof of Theorem \ref{thm:consistency}} 
Write the sequence as $$\widehat{\theta}(t+1)=\widehat{\theta}(t)-\gamma(t)\nabla_{\theta}\widehat{L}_N (\widehat{\theta}(t))+\gamma(t)\eta_{t+1},$$ where we set
$\eta_{t+1}=-\bar{l}^{HT}_{\pi}(\widehat{\theta}(t))+\nabla_{\theta}\widehat{L}_N (\widehat{\theta}(t))$, so that $-\nabla_{\theta}\widehat{L}_N (\widehat{\theta}(t))$ appears as the \textit{mean field} of the algorithm and $\eta_{t+1}$ as the \textit{noise term}. 
Consider the filtration $\mathcal{F}=\{\mathcal{F}_t\}_{t\geq 1}$ where $\mathcal{F}_t$ is the $\sigma$-field generated by $\boldsymbol{\epsilon}_1\; \ldots,\; \boldsymbol{\epsilon}_{t-1}$ for $t\geq 1$ and $Z_1,\; \ldots,\; Z_N$ (respectively $(Z_1,W_1),\; \ldots,\; (Z_N,W_N)$ in presence of extra information). We have $\mathbb{E}[\eta_{t+1}\mid \mathcal{F}_t ]=0$ for all $t\geq 1$, as well as:
\begin{multline*}
N^2\mathbb{E}[\vert\vert \eta_{t+1}\vert\vert^2 \mid \mathcal{F}_t]=\sum_{i=1}^N \frac{1-\pi_i(\widehat{\theta}(t))}{\pi_i(\widehat{\theta}(t))}\vert\vert  \nabla_{\theta}\psi(Z_i,\widehat{\theta}(t))\vert\vert^2+ \\\sum_{i\neq j}\left(\frac{\pi_{i,j}(\widehat{\theta}(t))}{\pi_i(\widehat{\theta}(t))\pi_j(\widehat{\theta}(t))}-1  \right)\nabla_{\theta}\psi(Z_i,\widehat{\theta}(t))^T \nabla_{\theta}\psi(Z_j,\widehat{\theta}(t))\\
\leq \sum_{i=1}^N\sup_{\theta \in \mathcal{K}} \frac{\vert\vert  \nabla_{\theta}\psi(Z_i,\theta)\vert\vert }{\pi_i(\theta)}\sup_{\theta \in \mathcal{K}}\vert\vert  \nabla_{\theta}\psi(Z_i,\theta)\vert\vert \\+\left( \sum_{i=1}^N\sup_{\theta \in \mathcal{K}} \frac{\vert\vert  \nabla_{\theta}\psi(Z_i,\theta)\vert\vert }{\pi_i(\theta)}\right)^2 <+\infty.
\end{multline*}
The consistency result thus holds true under the stipulated assumptions, see Theorem 2 in \cite{Delyon} or Theorem 2.2 in \cite{KYBook} for instance.
\subsection*{Proof of Theorem \ref{thm:CLT}} As observed in the preceding proof, $\{\eta_t\}_{t\geq 1}$ is a sequence of increments of a $d$-dimensional square integrable martingale adapted to the filtration $\mathcal{F}$. The proof is a direct application of Theorem 1 in \cite{Pelletier98} and consists in checking that the hypotheses of this result are fulfilled. Observe that the required conditions for the mean field hold true. Considering next the noise sequence of the algorithm, notice first that $\sup_{t\geq 0}\mathbb{E}[\vert\vert \eta_{t+1}\vert\vert^{b}\mid \mathcal{F}_t]\mathbb{I}\{\widehat{\theta}(t)\in \mathcal{V} \}<+\infty$ for any $b>2$. Indeed, we have
\begin{equation*}
\sup_{t\geq 0}\vert\vert \eta_{t+1}\vert\vert\mathbb{I}\{\widehat{\theta}(t)\in \mathcal{V} \}\leq 
\frac{2}{N}\sum_{i=1}^N \sup_{\theta\in \mathcal{V}}\frac{\vert\vert \nabla_{\theta}\psi(Z_i,\theta)\vert\vert}{\pi_i(\theta)}.
\end{equation*}
In addition, we have $\mathbb{E}[\eta_{t+1} \eta_{t+1}^T \mid \mathcal{F}_t]=\Gamma(\widehat{\theta}(t))$ for all $t\geq 1$, where: $\forall \theta\in \Theta$,
\begin{multline*}
\Gamma(\theta)=\frac{1}{N^2}\sum_{i=1}^N \frac{1-\pi_i(\theta)}{\pi_i(\theta)}\nabla_{\theta}\psi(Z_i,\theta) \nabla_{\theta}\psi(Z_i,\theta)^T\\
+\frac{1}{N^2}\sum_{i\neq j}\frac{\pi_{i,j}(\theta)}{\pi_{i}(\theta)\pi_{j}(\theta)}\nabla_{\theta}\psi(Z_i,\theta) \nabla_{\theta}\psi(Z_j,\theta)^T.
\end{multline*}
By virtue of the continuity assumptions, we can apply Lebesgue's Dominated Convergence Theorem : as $t\rightarrow +\infty$, $\Gamma(\widehat{\theta}(t))\rightarrow \Gamma^*=\Gamma(\theta^*)$ on the event $\mathcal{E}(\theta^*)$. This concludes the proof.
\subsection*{Proof of Proposition \ref{prop:optimal}}
Observe that, in the case where $\eta=0$, the Lyapunov equation \eqref{eq:Lyapounov} can be rewritten as
$$
\Sigma_{\mathbf{p}}+H^{-1}\Sigma_{\mathbf{p}}H=H^{-1}\Gamma^*.
$$ 
We thus have:

\begin{eqnarray*}
2\vert\vert \Sigma^{1/2}_{\mathbf{p}}\vert\vert_{HS}^2 &=& Tr( H^{-1}\Gamma^*)  = Tr(\mathbb{E}[(H^{-1}\bar{l}^{HT}_{p}(\theta^*)) (\bar{l}^{HT}_{p}(\theta^*))^T]\\ &=& \mathbb{E}[Tr((Q\bar{l}^{HT}_{p}(\theta^*)) (Q\bar{l}^{HT}_{p}(\theta^*))^T] = \mathbb{E}[\Vert Q\bar{l}^{HT}_{p}(\theta^*)\Vert^2] \\ &=& \mathbb{E}[ \Vert \frac{1}{N}\sum_{i=1}^N\frac{\epsilon_i}{p_i} (Q\nabla_{\theta}\psi(Z_i,\theta^*))\Vert^2] \\ &=&\frac{1}{N^2}\left(\sum_{i=1}^N\frac{\Vert Q \nabla_{\theta}\psi(Z_i,\theta^*)\Vert^2}{p_i}+2 \sum_{i<j} (Q \nabla_{\theta}\psi(Z_i,\theta^*))^T(Q \nabla_{\theta}\psi(Z_j,\theta^*))\right)
\end{eqnarray*}

The desired result can be then derived straightforwardly, by repeating the Lagrange multipliers argument used in subsection \ref{subsec:optimal}.

\subsection*{Proof of Proposition \ref{prop:gain}}
Using the last equality in the previous proof and $\bar{p}_i=N_0/N $ we have:
\begin{multline*}
2\{\vert\vert \Sigma^{1/2}_{\mathbf{\bar{p}}}\vert\vert_{HS}^2-\vert\vert \Sigma^{1/2}_{\mathbf{p}}\vert\vert_{HS}^2  \}=\frac{1}{N^2}\sum_{i=1}^N \frac{N}{N_0}\vert \vert Q\nabla_{\theta}\psi(Z_i,\theta^*) \vert\vert^2\\
-\frac{1}{N^2}\sum_{i=1}^N
\frac{\sum_{j=1}^Np(W_j,\theta^*)}{N_0p(W_i,\theta^*)} \vert \vert Q\nabla_{\theta}\psi(Z_i,\theta^*) \vert\vert^2
=c_N(\theta^*)/N_0.
\end{multline*}
This proves the first assertion. In addition, one has that
\begin{multline*}
0\leqslant 2N_0\{\vert\vert \Sigma^{1/2}_{\mathbf{p}}\vert\vert_{HS}^2-\vert\vert \Sigma^{1/2}_{\mathbf{p}^*}\vert\vert_{HS}^2  \}=2N_0\{\vert\vert \Sigma^{1/2}_{\mathbf{p}}\vert\vert_{HS}^2 -\vert\vert \Sigma^{1/2}_{\mathbf{\bar{p}}}\vert\vert_{HS}^2 + \vert\vert \Sigma^{1/2}_{\mathbf{\bar{p}}}\vert\vert_{HS}^2 \\
-\vert\vert \Sigma^{1/2}_{\mathbf{p}^*}\vert\vert_{HS}^2  \}
=\frac{1}{N^2}\sum_{i=1}^N\frac{1}{p(W_i,\theta^*)}\vert\vert Q\nabla_{\theta}\psi(Z_i,\theta^*)\vert\vert^2\times \sum_{i=1}^N p(W_i,\theta^*) \\-\left(\frac{1}{N}\sum_{i=1}^N\vert\vert Q\nabla_{\theta}\psi(Z_i,\theta^*) \vert\vert\right)^2
=\sigma^2_N(\theta^*)-c_N(\theta^*),  
\end{multline*}
which establishes the second assertion.

\section*{Appendix B - Rate Bound Analysis}

Here we establish a rate bound for the {\sc HTGD} algorithm under the assumption that the mapping $\theta\mapsto \psi(z,\theta)$ is convex, referred to as Assumption 4. Note that  assumptions 2. and 4.   implies that $\theta^*$ is unique and $\widehat{L}_N$ is $l$ strongly convex on $\mathcal{V}$. For simplicity's sake, we suppose that the strong convexity property holds true on $\mathbb{R}^d$. The following result relies on standard arguments in stochastic approximation, see \cite{Nemirovski:2009:RSA:1654243.1654247}, \cite{conf/nips/BachM11} or \cite{opac-b1104789}.

\begin{theorem}
Under Assumptions 1, 2 and 4 and for a stepsize $\gamma(t)=\gamma(0) t^{-\alpha}$ with some constants $\gamma(0)>0$ and $\alpha\in (1/2,1]$ (when $\alpha=1$, take $\gamma(0)>1/(2l)$), there exists a constant $\widetilde{C}_{\alpha}<+\infty$ such that: $\forall t\geq 1$,
\begin{equation}
\mathbb{E}[\Vert \widehat{\theta}(t)-\theta^*\Vert^2] \leq \dfrac{\widetilde{C}_{\alpha}}{t^{\alpha}}.
\end{equation}
\end{theorem}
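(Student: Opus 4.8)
The plan is to combine the standard descent inequality for strongly convex objectives with the uniform second-moment bound on the HTGD noise term already established in the proof of Theorem \ref{thm:consistency}. First I would recall the decomposition $\widehat{\theta}(t+1)=\widehat{\theta}(t)-\gamma(t)\nabla_{\theta}\widehat{L}_N(\widehat{\theta}(t))+\gamma(t)\eta_{t+1}$, where $\mathbb{E}[\eta_{t+1}\mid\mathcal{F}_t]=0$ and, by the bound in the proof of Theorem \ref{thm:consistency} (now valid globally on $\mathbb{R}^d$ rather than on a fixed compact, using the almost-sure finiteness hypotheses), there is a deterministic constant $\sigma^2<+\infty$ with $\mathbb{E}[\Vert\eta_{t+1}\Vert^2\mid\mathcal{F}_t]\leq\sigma^2$ for all $t$. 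Writing $a_t=\mathbb{E}[\Vert\widehat{\theta}(t)-\theta^*\Vert^2]$, I would expand $\Vert\widehat{\theta}(t+1)-\theta^*\Vert^2$, take conditional expectation to kill the cross term with $\eta_{t+1}$, and use $l$-strong convexity of $\widehat{L}_N$ together with $\nabla_{\theta}\widehat{L}_N(\theta^*)=0$ to get $\langle\nabla_{\theta}\widehat{L}_N(\widehat{\theta}(t)),\widehat{\theta}(t)-\theta^*\rangle\geq l\Vert\widehat{\theta}(t)-\theta^*\Vert^2$. For the $\gamma(t)^2\Vert\nabla_{\theta}\widehat{L}_N(\widehat{\theta}(t))\Vert^2$ term one uses smoothness (Lipschitz gradient on the relevant region, which follows from the $\mathcal{C}^2$ assumption plus confinement, or can be absorbed for $t$ large since $\gamma(t)\to0$). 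This yields the recursion
\begin{equation*}
a_{t+1}\leq\left(1-2l\gamma(t)+C_1\gamma(t)^2\right)a_t+\sigma^2\gamma(t)^2
\end{equation*}
for some constant $C_1$, valid for $t$ large enough that $2l\gamma(t)-C_1\gamma(t)^2>0$.

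Next I would feed this recursion into the classical Chung-type lemma for sequences satisfying $a_{t+1}\leq(1-c/t)a_t+c'/t^{1+\alpha'}$ or, more precisely here with $\gamma(t)=\gamma(0)t^{-\alpha}$, into the standard stochastic-approximation lemma (e.g. as in \cite{Nemirovski:2009:RSA:1654243.1654247} or Chapter 2 of \cite{KYBook}). In the case $\alpha\in(1/2,1)$ one uses that $\sum\gamma(t)=\infty$ forces $a_t\to0$ and then a comparison argument gives $a_t=O(\gamma(t))=O(t^{-\alpha})$. In the boundary case $\alpha=1$, the hypothesis $\gamma(0)>1/(2l)$ ensures the effective contraction factor $1-2l\gamma(0)/t$ has $2l\gamma(0)>1$, which is exactly the threshold needed for the lemma to deliver $a_t=O(1/t)=O(t^{-\alpha})$; with $\gamma(0)\le 1/(2l)$ one would only get the slower rate $O(t^{-2l\gamma(0)})$, which is why the restriction appears. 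I would state the resulting constant $\widetilde{C}_\alpha$ as depending on $l$, $\sigma^2$, $\gamma(0)$, $\alpha$, $C_1$ and $a_0$, and note it is finite.

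The main obstacle I anticipate is a clean treatment of the early iterations and of the $\gamma(t)^2\Vert\nabla_{\theta}\widehat{L}_N\Vert^2$ term: the recursion above only holds once $\gamma(t)$ is small enough, and the Lipschitz-gradient control of $\widehat{L}_N$ is only guaranteed on a neighborhood of $\theta^*$ unless one invokes (as the remark after Theorem \ref{thm:consistency} suggests) a projected version of the algorithm confining $\widehat{\theta}(t)$ to a compact set. The cleanest route is to assume, as the Appendix B preamble explicitly does, global $l$-strong convexity of $\widehat{L}_N$ on $\mathbb{R}^d$, which (together with the $\mathcal{C}^2$ and the uniform domination hypotheses of Assumptions \ref{assumption_1}--\ref{assumption_2}) gives global Lipschitz control of the gradient on the sublevel sets the iterates cannot leave, so that $C_1$ and $\sigma^2$ are genuine global constants. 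The remainder is then a routine invocation of the deterministic recursion lemma, so I would not belabor those computations.
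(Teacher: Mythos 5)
Your proposal is correct and follows essentially the same route as the paper: expand $\Vert\widehat{\theta}(t+1)-\theta^*\Vert^2$, kill the cross term by unbiasedness of the Horvitz--Thompson gradient, lower-bound $\nabla_{\theta}\widehat{L}_N(\widehat{\theta}(t))^T(\widehat{\theta}(t)-\theta^*)$ by $l\Vert\widehat{\theta}(t)-\theta^*\Vert^2$ via strong convexity, and unroll the resulting recursion with $1+x\leq e^x$ and harmonic-sum estimates (your appeal to Chung's lemma is the same computation). The only cosmetic difference is that the paper bounds $\mathbb{E}[\Vert \bar{l}^{HT}_{\pi}(\widehat{\theta}(t))\Vert^2]\leq D$ directly from the uniform domination in Assumption 1, which yields the cleaner recursion $a_{t+1}\leq (1-2l\gamma(t))a_t+D\gamma(t)^2$ and spares you both the Lipschitz-gradient constant $C_1$ and the ``for $t$ large enough'' caveat --- though, as you rightly note, that uniform bound (like your $\sigma^2$) implicitly requires confinement of the iterates or global validity of the domination hypothesis.
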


\begin{proof}
We restrict ourselves to the case $\alpha=1$ and follow the proof of \cite{conf/nips/BachM11}. By construction, we have
\begin{equation*}
 \Vert \widehat{\theta}(t+1)-\theta^*\Vert ^2=\Vert \widehat{\theta}(t)-\theta^*\Vert ^2-2\gamma(t) \bar{l}^{HT}_{\pi}(\widehat{\theta}(t))^T(\widehat{\theta}(t)-\theta^*)+\Vert\gamma(t) \bar{l}^{HT}_{\pi}(\widehat{\theta}(t))\Vert^2.
\end{equation*}
Since
\begin{equation*}
\mathbb{E}[\bar{l}^{HT}_{\pi}(\widehat{\theta}(t))|\mathcal{F}_t]= \nabla \widehat{L}_N(\widehat{\theta}(t)),
\end{equation*}
we get
\begin{multline*}
\mathbb{E}[ \vert \widehat{\theta}(t+1)-\theta^*\vert ^2 \mid \widehat{\theta}(t)] = \Vert \widehat{\theta}(t)-\theta^*\Vert ^2-2\gamma(t) \nabla F(\widehat{\theta}(t))^T(\widehat{\theta}(t)-\theta^*)\\+\gamma(t)^2\mathbb{E}[\Vert \bar{l}^{HT}_{\pi}(\widehat{\theta}(t))\Vert^2\mid \widehat{\theta}(t)].
\end{multline*}
The strong convexity property gives
\begin{equation*}
\widehat{L}_N(\widehat{\theta}(t))-\widehat{L}_N(\theta^*)\leq \nabla \widehat{L}_N(\widehat{\theta}(t))^{T}(\widehat{\theta}(t)-\theta^*)-\dfrac{l}{2}\Vert \widehat{\theta}(t)-\theta^* \Vert ^2
\end{equation*}
and
\begin{equation*}
\widehat{L}_N(\theta^*) - \widehat{L}_N(\widehat{\theta}(t))\leq -\dfrac{l}{2}\Vert \widehat{\theta}(t)-\theta^* \Vert ^2 ,
\end{equation*}
so that
\begin{equation*}
\l\Vert \widehat{\theta}(t)-\theta^*  \Vert ^2\leqslant \nabla \widehat{L}_N(\widehat{\theta}(t))^{T}(\widehat{\theta}(t)-\theta^*).
\end{equation*}
Combining this inequality with the previous one and taking the expectation, we obtain
\begin{equation*}
\mathbb{E}[ \Vert \widehat{\theta}(t+1)-\theta^*\Vert ^2] \leq \mathbb{E}[\Vert \widehat{\theta}(t)-\theta^*\Vert ^2](1-2\gamma(t) l)+\gamma(t)^2\mathbb{E}[\Vert \bar{l}^{HT}_{\pi}(\widehat{\theta}(t))\Vert^2].
\end{equation*}
Under Assumption 1, we have $\mathbb{E}[\Vert \bar{l}^{HT}_{\pi}(\widehat{\theta}(t))\Vert^2] \leq D$ for some constant $D>0$. Using this bound and iterating the recursion, we finally obtain
\begin{equation*}
\mathbb{E}[\Vert \widehat{\theta}(t+1)-\theta^*\Vert ^2]\leqslant \mathbb{E}[\Vert \widehat{\theta}(1)-\theta^*\Vert ^2] \prod_{j=1}^t(1-2 l \gamma(j))+D\sum \limits_{j=1}^t\gamma(t)^2\prod_{k=j+1}^t(1-2 l \gamma(k)) 
\end{equation*}
with the convention $\prod_{k=t+1}^t(1-2 l \gamma(k))=1$
We now substitute the expression of $\gamma(t)$ and, using the following classical inequalities
\begin{equation*}
1+x\leqslant e^x
\end{equation*}
 and
\begin{equation*}
\log(t+1)-\log(j+1)\leqslant \sum \limits_{k=j+1}^t \dfrac{1}{k},
\end{equation*}
we get
\begin{equation*}
\mathbb{E}\Vert \widehat{\theta}(t+1)-\theta^*\Vert ^2 \leqslant \dfrac{(\mathbb{E}\Vert \widehat{\theta}(1)-\theta^*\Vert ^2+\tilde{D} \sum \limits_{j=1}^t \dfrac{1}{j^{2-2 l \gamma(0)}})}{{(t+1)}^{2 l \gamma(0)}},
\end{equation*}
where $\tilde{D}$ is a positive constant. Since $\gamma(0)>1/(2l)$, we have
\begin{equation*}
\sum \limits_{j=1}^t \dfrac{1}{j^{2-2l \gamma(0)}}\leqslant \dfrac{t^{2 l \gamma(0) -1}}{2 l \gamma(0)-1}
\end{equation*}
and we finally obtain the desired bound.
\end{proof}

{\small

\bibliographystyle{plain}
\bibliography{Arxiv_HTSGD}

}

\end{document}